\documentclass[12pt]{article}

\usepackage{enumerate}
\usepackage{url} 

\newcommand{\blind}{0}

\addtolength{\oddsidemargin}{-.5in}%
\addtolength{\evensidemargin}{-1in}%
\addtolength{\textwidth}{1in}%
\addtolength{\textheight}{1.7in}%
\addtolength{\topmargin}{-1in}%

\usepackage{cite}
\usepackage{natbib}   
\usepackage{setspace}

\usepackage{amsmath,amssymb,amsfonts,amsthm}
\usepackage{bm}
\usepackage{hyperref}
\usepackage{soul, color}
\usepackage{float}
\usepackage{caption}
\usepackage{multirow}
\usepackage[linesnumbered,algoruled,boxed,lined]{algorithm2e}
\SetKwInput{KwInput}{Input}                
\SetKwInput{KwOutput}{Output}              
\SetKwInOut{Parameter}{Parameters}
\usepackage{comment}

\theoremstyle{plain}

\pagenumbering{arabic}

\usepackage{algorithmic}
\usepackage{graphicx}
\usepackage{subfigure}
\usepackage{textcomp}
\usepackage{xcolor}
\def\BibTeX{{\rm B\kern-.05em{\sc i\kern-.025em b}\kern-.08em
    T\kern-.1667em\lower.7ex\hbox{E}\kern-.125emX}}

\usepackage{xspace}

\newtheorem{lemma}{Lemma}[section]

\newtheorem{theorem}{Theorem}

\newtheorem{proposition}[theorem]{Proposition}

\newcommand{\name}{\texttt{perTucker}\xspace}
\newcommand{\tsA}{{\bm{\mathcal A}}}
\newcommand{\tsB}{{\bm{\mathcal B}}}
\newcommand{\tsC}{{\bm{\mathcal C}}}

\newcommand{\tsX}{{\bm{\mathcal X}}}
\newcommand{\tsY}{{\bm{\mathcal Y}}}
\newcommand{\tsR}{{\bm{\mathcal R}}}

\newcommand{\matA}{{\bm{A}}}
\newcommand{\matB}{{\bm{B}}}

\newcommand{\matI}{{\bm{I}}}

\newcommand{\matU}{{\bm{U}}}

\newcommand{\matV}{{\bm{V}}}
\newcommand{\tmatV}{{\tilde{\bm{V}}}}

\newcommand{\setK}{{\mathcal{K}}}
\newcommand{\norm}[1]{\left\lVert#1\right\rVert}
\newcommand{\abs}[1]{\left\lvert#1\right\rvert}
\begin{document}

\def\spacingset#1{\renewcommand{\baselinestretch}%
{#1}\small\normalsize} \spacingset{1}

\onehalfspacing  

\if0\blind
{
  \title{\bf Personalized Tucker Decomposition: Modeling Commonality and Peculiarity on Tensor Data}
  \author{Jiuyun Hu$^1$, Naichen Shi$^2$, Raed Al Kontar$^2$, Hao Yan$^1$\\
    $^1$School of Computing and Augmented Intelligence\\
    Arizona State University\\
    $^2$Department of Industrial \& Operations Engineering\\
    University of Michigan, Ann Arbor\\
    }
  \maketitle
} \fi

\if1\blind
{
  \bigskip
  \bigskip
  \bigskip
  \begin{center}
    {\LARGE\bf Personalized Tucker Decomposition: Modeling Commonality and Peculiarity on Tensor Data}
\end{center} 
  \medskip
} \fi

\bigskip
\begin{abstract}
We propose personalized Tucker decomposition (\name) to address the limitations of traditional tensor decomposition methods in capturing heterogeneity across different datasets. \name decomposes tensor data into shared global components and personalized local components. We introduce a mode orthogonality assumption and develop a proximal gradient regularized block coordinate descent algorithm that is guaranteed to converge to a stationary point. By learning unique and common representations across datasets, we demonstrate \name's effectiveness in anomaly detection, client classification, and clustering through a simulation study and two case studies on solar flare detection and tonnage signal classification.
\end{abstract}

\noindent%
{\it Keywords:} Tucker decomposition, Personalization, Heterogeneous data
\vfill

\newpage


\section{Introduction} \label{sec:Introduction}

In recent years, tensor decomposition methods have grown rapidly, providing the ability to analyze and utilize high-dimensional data structures, which are essentially multi-dimensional arrays or matrices. These decompositions are powerful mathematical tools that facilitate the extraction of latent features and patterns from complex data, enabling efficient data representation, dimensionality reduction, compression, completion, noise removal, and prediction. Indeed, tensor decomposition has seen immense success across a wide variety of applications that include: natural image and video processing \citep{gatto2021tensor, momeni2022high}, health care systems \citep{ren2022blockchain, sandhu2018tdrm}, point cloud data \citep{yan2019structured, du2022tensor} and manufacturing \citep{zhen2023image, yan2014image}, amongst many others. 


Among the widely used techniques in this area, Tucker decomposition stands out as a prominent approach that has been successfully tested and deployed in various settings and applications \citep{kolda2009tensor, zubair2013tensor, li2020tensor}. The Tucker approach generalizes singular value decomposition (SVD) to higher-order tensors, providing a core tensor and a set of factor matrices that capture the interactions between dimensions \citep{tucker1966some}. The factor matrices represent the underlying patterns and structures in the data, while the core tensor captures the interaction between these patterns. By analyzing factor matrices and the core tensor, one can identify and extract meaningful features that can be used for further analysis, such as anomaly detection \citep{yan2014image} and process optimization \citep{yan2019structured}. 

Despite the efficacy of tensor decomposition methods, they assume that complex, heterogeneous data can be adequately represented by a single set of global factor matrices and a core tensor. This assumption may oversimplify the intrinsic disparities that exist when the datasets come from different sources, clients, or modalities, potentially compromising the accuracy of the resulting representations. In practice, nowadays, it is common to collect data across various edge devices, such as sensors and phones, which exhibit unique local data patterns due to various local conditions, system status, or data collection methodologies \citep{kontar2021internet}. 

Using a universal tensor decomposition strategy may not accurately capture these distinct data patterns, leading to suboptimal representations. An alternative strategy involves fitting a local tensor decomposition for the data source. However, this does not utilize the rich data available across sources and may excessively overfit the peculiarities of each dataset while neglecting the shared patterns or commonalities among the datasets. More importantly, both strategies overlook the opportunity to model heterogeneity across data sources and exploit this for improved downstream analysis, be it in prediction, clustering, classification, or anomaly detection.


For example, in the context of one of our case studies on tonnage signal monitoring processes, numerous sensors are employed to monitor the tonnage force at various locations within a production system. The data collected at each location features common patterns of normal signal variations and heterogeneous failure-related patterns. Here, the heterogeneous nature of the data and the presence of diverse failure patterns pose significant challenges for traditional tensor decomposition methods. Therefore, it is essential to develop a personalized tensor decomposition that can effectively capture and represent commonality and peculiarity across the data collected from each location. Consequently, these methods could reveal previously hidden patterns and relationships, allowing more effective data analysis, decision-making, and system optimization.





Inspired by a recent personalization technique for vector datasets coined as personalized principal component analysis (PCA) \citep{shi2022personalized}, we propose the personalized Tucker decomposition (\name) to decompose tensor data collected from different sources into shared global components and personalized components to capture the heterogeneity of the data. Global components model the common patterns shared across the datasets, while local components model the unique features of a specific dataset. At the heart of our approach is (i) a mode orthogonality constraint to distinguish global and local features and (ii) a proximal gradient-regularized block coordinate descent algorithm that operates within feasible regions of the constraint and can provably recover stationary solutions.

Using two case studies and simulated data, we highlight the ability of \name to benefit (i) anomaly detection as one can monitor changes in the local features to better (and faster) detect anomalies in data collected over time and (ii) classification \& clustering as operating on local features may yield better statistical power than the raw data since differences are more explicit when global features are removed.

The remainder of the paper is organized as follows. Sec. \ref{sec:Literature} reviews relevant literature on tensor decomposition methods. Sec. \ref{sec:Formulation} introduces \name, proposes an algorithm to estimate model parameters, proves convergence, and sheds light on potential applications of \name. Sec. \ref{sec:Simulation} uses simulated data to highlight the advantageous properties of our model. Two case studies on solar flare detection and tonnage signal classification are then presented in Sec. \ref{sec:CaseStudy}. Finally, Sec. \ref{sec:Conclusion} concludes the paper with a discussion about open problems.

We note that hereon we will use data source, client and modality interchangeably to index the origin from which each dataset was created. 

\section{Literature Review} \label{sec:Literature}

Various tensor decomposition methods have been proposed in the literature. Among them, Tucker \citep{tucker1966some} and CP decompositions \citep{hitchcock1927expression} have received the most attention. They have been applied to both supervised and unsupervised learning tasks. 

For unsupervised tasks, great emphasis was placed on anomaly detection and clustering. In anomaly detection, 
starting from the work of \citet{nomikos1994monitoring}, tensor-based detection has grown dramatically in the literature. Some examples include  \citet{li2011robust}, where a robust tensor subspace learning algorithm is used for online anomaly detection, and \citet{yan2014image}, which studied the relationship between Tucker decomposition, CP decomposition, multilinear principal component analysis, and tensor rank one decomposition and proposed monitoring statistics for each method. Interested readers are referred to \citet{fanaee2016tensor} for an overview of existing tensor-based methods for anomaly detection. 

In clustering, various methods have been proposed to improve the accuracy and efficiency of clustering algorithms. These methods include tensor-based subspace clustering \citep{fu2016tensor}, multi-view clustering \citep{zhang2023multi, li2023multi}, and multi-mode clustering \citep{he2022multi}. 

Within these areas, \citet{sun2019dynamic} developed a dynamic tensor clustering method based on CP tensor decomposition, which does not limit the tensor order and can be learned efficiently. \citet{wu2016general} proposed tensor spectral co-clustering based on a stochastic Markov process. This method works for general tensors of any order and can simultaneously cluster all dimensions. \citet{zhou2019tensor} proposed a tensor low-rank reconstruction technique (TLRR). The reconstruction consists of a low-rank dictionary recovery component and sparse noise. The dictionary is then used to obtain a similarity matrix to cluster the data. 

For supervised tasks, such as regression and classification, various tensor-based classification methods have been developed, including logistic tensor regression \citep{tan2013logistic}, support tensor machine \citep{hao2013linear}, and tensor Fisher discriminant analysis \citep{yan2005discriminant}. Furthermore, different forms of tensor regression have been proposed, depending on the dimensionality of the input and output variables. These include scalar-to-tensor regression \citep{zhou2013tensor}, tensor-to-scalar regression \citep{yan2019structured}, and tensor-to-tensor regression \citep{gahrooei2021multiple}. 
 
Here it is worth noting that a large body of literature has focused on separating noise from a low-rank background, starting from pioneering work on robust PCA \citep{candes2011robust}. Subsequently, this separation has been extended to anomaly detection by decomposing the data into three parts: background, anomaly, and noise. For example, \citet{yan2017anomaly} proposed a smooth sparse decomposition (SSD) by decomposing large-scale image data into a smooth background, a sparse anomaly, and noise and outperformed many other image-based detectors. Similar approaches can be found in crime monitoring \citep{zhao2022rapid}, public health surveillance \citep{dulal2022covid, zhao2020rapid}, and transfer learning applications \citep{li2022profile}. Unfortunately, such methods suffer from an inability to learn the data representation as they assume that the basis functions or data representation are known, which limits their ability to handle complex datasets. To mitigate this, recent methods have been proposed to learn the representation of background components, including Bayesian methods \citep{guo2022bayesian} and deep neural networks \citep{zhao2022deep}. Still, such approaches cannot simultaneously learn the basis functions for the background and anomaly components of the dataset.

Given the above literature, to the best of our knowledge, to date, there are no tensor decomposition methods capable of learning shared and common representations across different datasets. The closest work along this line is personalized PCA \texttt{perPCA}. \texttt{perPCA} introduces a novel technique to perform PCA on data originating from disparate sources or modalities that exhibit heterogeneous trends but also possess common characteristics \citep{shi2022personalized}. \texttt{perPCA} uses orthogonal global and local components to capture both shared and unique characteristics of each source. The paper offers a sufficient identifiability condition, theoretical guarantees, and competitive empirical results. Unfortunately, \texttt{perPCA} requires vectorization of datasets and cannot directly handle tensor data. The extension of \texttt{perPCA} to tensor data faces fundamental challenges due to the large degree of freedom and nonclosed-form solutions with tensor decompositions, the difficulty in defining tensor-based orthogonal constraints, and computational challenges involving high-order tensors. 

Our work aims to bring personalization to a tensor paradigm and address the challenges imposed to make that possible.

\section{Model Development} \label{sec:Formulation}
In this section, we first set the notation in Sec. \ref{subsec:Preliminary} followed by the motivation and formulation of \name in Sec. \ref{subsec:Motivation}. In Sec. \ref{subsec:Algorithm}, we propose an efficient algorithm to learn \name. Convergence, practical implementation, and potential applications are, respectively, highlighted in Sec. \ref{subsec:convergence}, Sec. \ref{subsec:Usage}, and Sec. \ref{subsec:practical}. We note that the proof of all propositions, lemmas, and theorems is deferred to the Appendix.

\subsection{Preliminary}\label{subsec:Preliminary}
A tensor can be regarded as a data structure with more than 2 dimensions, also known as modes in tensor analysis (see Fig. \ref{fig:Tensor_Example}). For example, in images, we use a vector of length $3$ to represent the RGB channel of the pixel. Thus, a picture can be represented by a 3-dimensional tensor with dimensions height$\times$width$\times$RGB. If we have multiple pictures of the same dimension, a dataset can be represented by a 4-dimensional tensor.

\begin{figure}[t]
    \centering
    \includegraphics[width=0.6\linewidth]{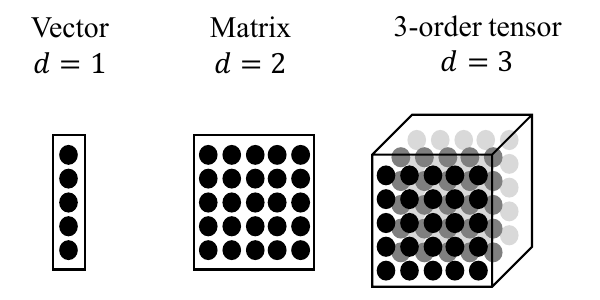}
    \caption{Example of Vector, Matrix and Tensor Data}
    \label{fig:Tensor_Example}
\end{figure}

\paragraph{Notation} Throughout this paper, real numbers are denoted by letters, e.g., $N$, $i$; vectors by bold lowercase letters, e.g., $\bm c$; matrices by bold uppercase letters, e.g., $\bm U$; sets by script letters, e.g., $\mathcal K$; and tensors by bold script letters, e.g., $\bm{\mathcal X}$. 

\paragraph{Mode-$k$ product and tensor unfolding} We briefly review the notion of a Tucker tensor product. A $K$-mode tensor is represented by $\bm{\mathcal{X}}\in\mathbb{R}^{I_{1}\times\cdots\times I_{K} }$, where $I_{k}$ denotes the mode-$k$ dimension of $\tsX$ for $k=1,\cdots, K$.  We use $\tsX[i_1,i_2,\cdots,i_K]$ to denote the $(i_1,i_2,\cdots,i_K)$-th entry of $\tsX$. The mode-$k$ product of a tensor $\tsX$ with a matrix $\bm{V}\in\mathbb{R}^{J_{k}\times I_{k}}$
produces a tensor defined by $(\bm{\mathcal{X}}\times_{k}\bm{V})[i_{1},\cdots,i_{k-1},j_{k},i_{k+1},\cdots,i_{K}]=\sum_{i_{k}}\tsX[i_{1},\cdots,i_{k},\cdots,i_{N}]V[j_{k},i_{k}]$. For a tensor $\bm{\mathcal{X}}$
and a specific mode $k$, we use the subscript with parenthesis $\tsX_{(k)}\in \mathbb{R}^{I_k\times\prod_{q=1,q\neq k}^K I_q}$ to denote the unfolding of $\tsX$ with respect to dimension $k$, $\tsX_{(k)}[i_k,j]=\tsX[i_1,i_2,\cdots,i_K]$ where $j=1+\sum_{q=1,q\neq k}^K (i_q-1)J_q$ and $J_q=\prod_{m=1,m\neq k}^{q-1}I_m$. The columns of the $k$-mode unfolding $\tsX_{(k)}$ are the n-mode vectors of $\tsX$.

\paragraph{Tucker decomposition}
Tucker decomposition \citep{tucker1966some} 
 decomposes a tensor into a core tensor multiplied by a factor matrix along each mode, $\tsX\approx\tsC\times_{1}\matU_{1}\times_{2}\matU_{2}\cdots\times_{K}\matU_{K}$,
where $\matU_{k}$ is an orthonormal $J_{k}\times I_{k}$ factor matrix typically with $J_k<I_k$. $\matU_k$ can be regarded as a principal component in mode-$k$. 

Tucker decomposition has an equivalent formulation in terms of the unfolded tensor, that is, $\tsX_{(k)}=\matU_k\tsC_{(k)}(\matU_{K}\bigotimes \cdots \bigotimes \matU_{k+1} \bigotimes\matU_{k-1} \cdots\bigotimes\matU_{1})^\top$. Here, $\bigotimes$ is the Kronecker product.

\paragraph{Tensor inner product} The inner product of two tensors of the same shape $\tsA, \tsB \in \mathbb{R}^{I_{1}\times\cdots\times I_{K}}$, is defined
$$\left \langle \tsA, \tsB \right \rangle = \sum_{i_{1},\cdots,i_{K}}\matA[i_{1},\cdots,i_{k},\cdots,i_{K}] \matB[i_{1},\cdots,i_{k},\cdots,i_{K}].$$
Then the Frobenius norm of a tensor $\tsA$ can be defined as $\norm{\tsA}_{F}^{2} = \left \langle \tsA, \tsA \right \rangle$, which is the sum of squares of all elements.


\subsection{Motivation \& formulation}\label{subsec:Motivation}
Suppose we have tensor data from $N$ sources. Each source has tensor data of order $K$. We use $\tsY_n$ to denote the data from source $n$, and assume that $\tsY_n$ has dimensions $I_1 \times I_2\times \ldots \times I_K \times s_n$. Here, all dimensions across sources have the same length except for the last one. In particular, in practical applications, the last dimension $s_n$ denotes the number of samples from the source $n$, which often differs between sources. 

Our approach relies on defining global and local components to model commonality and heterogeneity across different sources. To do so, we let the global components consist of shared global factor matrices $\matU_{G,1},\ldots, \matU_{G,K}$ and individual global core tensors $\bm{\mathcal C}_{G,1},\ldots,\bm{\mathcal C}_{G,N}$ for each source. The local components consist of individual core tensors $\bm{\mathcal C}_{L,1},\ldots,\bm{\mathcal C}_{L,N}$ and individual local factor matrices $\bm{V}_{n,1},\ldots,\bm{V}_{n,K}$. As such, the reconstructions of the global and local components for source $n$ are $\bm{\mathcal C}_{G,n}\times_1\matU_{G,1}\ldots\times_K\matU_{G,K}$ and $\bm{\mathcal C}_{L,n}\times_1\bm{V}_{n,1}\ldots\times_K\bm{V}_{n,K}$, respectively. 

Based on the above definitions, we assume our data-generating process to be
\begin{equation}
\bm{\mathcal Y}_n = 
\underbrace{\bm{\mathcal C}_{G,n}\times_1\matU_{G,1}\ldots\times_K\matU_{G,K}}_{\text{global}} + \underbrace{\bm{\mathcal C}_{L,n}\times_1\bm{V}_{n,1}\ldots\times_K\bm{V}_{n,K}}_{\text{local} \, i}+ \bm{\mathcal{E}}_n,
\label{eq: perTuckerModel}
\end{equation}
where $\bm{\mathcal{E}}_n$ are tensors that represent additive noise. 

Since global and local components should convey different information, they need to be distinguished so that each part can vary independently of each other. To do so, we require the orthogonality of the global and local tensors. Specifically, we assume that:

$$\left \langle \bm{\mathcal Y}_{G,n}, \bm{\mathcal Y}_{L,n} \right \rangle = 0, \quad \forall  \bm{\mathcal C}_{G,n}, \bm{\mathcal C}_{L,n},$$

\noindent where $\bm{\mathcal Y}_{G,n} = \bm{\mathcal C}_{G,n}\times_1\matU_{G,1}\ldots\times_K\matU_{G,K}$ 
and $\bm{\mathcal Y}_{L,n} = \bm{\mathcal C}_{L,n}\times_1\bm{V}_{n,1}\ldots\times_K\bm{V}_{n,K}$. Interestingly, it turns out that this condition is equivalent to having the global and local factor matrices orthogonal in at least one dimension, as stated in Proposition \ref{prop:orthogonal}. 
\begin{proposition}\label{prop:orthogonal} For each $n=1,\ldots,N$, the following two conditions are equivalent.
\begin{itemize}
    \item $\left \langle \bm{\mathcal Y}_{G,n}, \bm{\mathcal Y}_{L,n} \right \rangle = 0, \quad \forall  \bm{\mathcal C}_{G,n}, \bm{\mathcal C}_{L,n}$.
    \item There exists a mode $k\in\{1,\ldots,K\}$, where $\matU_{G,k}^\top\bm{V}_{n,k}=0$. 
\end{itemize}
\end{proposition}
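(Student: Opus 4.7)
}

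My plan is to reduce the tensor inner product to a bilinear form in the core tensors by using a mode-$k$ unfolding together with the mixed product property of the Kronecker product. Recall that $\langle \tsA, \tsB \rangle = \mathrm{tr}(\tsA_{(k)}^\top \tsB_{(k)})$ for any mode $k$, and that the Tucker unfolding identity (stated in Sec. \ref{subsec:Preliminary}) writes
$$\tsY_{G,n,(k)} = \matU_{G,k}\,\tsC_{G,n,(k)}\,\bigl(\textstyle\bigotimes_{q\neq k}\matU_{G,q}\bigr)^\top,\qquad \tsY_{L,n,(k)} = \bm V_{n,k}\,\tsC_{L,n,(k)}\,\bigl(\textstyle\bigotimes_{q\neq k}\bm V_{n,q}\bigr)^\top.$$
Using the mixed product property $(\bigotimes_q \matA_q)^\top(\bigotimes_q \matB_q) = \bigotimes_q(\matA_q^\top \matB_q)$, the tensor inner product collapses to the scalar
$$\langle \tsY_{G,n},\tsY_{L,n}\rangle = \mathrm{tr}\!\left(\tsC_{G,n,(k)}^\top\,(\matU_{G,k}^\top \bm V_{n,k})\,\tsC_{L,n,(k)}\,\bigl(\textstyle\bigotimes_{q\neq k}\bm V_{n,q}^\top\matU_{G,q}\bigr)\right).$$

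The ($\Leftarrow$) direction is then immediate: if $\matU_{G,k}^\top\bm V_{n,k}=0$ for some $k$, taking that particular unfolding in the formula above makes the inner product vanish for all cores, which gives the claim.

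For the ($\Rightarrow$) direction I fix $k=1$ (the choice is arbitrary), set $M := \matU_{G,1}^\top\bm V_{n,1}$ and $N := \bigotimes_{q=2}^K \bm V_{n,q}^\top\matU_{G,q}$, and exploit the freedom to pick the cores. Choosing rank-one unfoldings $\tsC_{G,n,(1)} = u_1 v_1^\top$ and $\tsC_{L,n,(1)} = u_2 v_2^\top$ the expression factors into $(u_1^\top M u_2)(v_2^\top N v_1)$. Since this must vanish for all $u_i,v_i$, a standard dichotomy argument (fix a quadruple making one factor nonzero and vary the other) forces $M=0$ or $N=0$. If $M=0$ we are done with $k=1$; otherwise $N=0$ and, since a Kronecker product of matrices is zero iff at least one factor is zero, some $\bm V_{n,k}^\top\matU_{G,k}=0$ with $k\ge 2$, which is the required condition.

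The main obstacle, which I expect to be mild, is book-keeping: getting the Kronecker ordering right after transposing the unfolding factors, and justifying carefully that bilinearity plus rank-one test matrices are enough to conclude $M=0$ or $N=0$ (rather than something weaker like $MXN=0$ only on a restricted subspace of $X$). Once these two ingredients are established cleanly, the two implications are short consequences of the unfolded identity above.
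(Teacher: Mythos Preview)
Your proposal is correct and follows essentially the same approach as the paper: both reduce $\langle \tsY_{G,n}, \tsY_{L,n}\rangle$ to a bilinear form in the core tensors via the mixed-product property of the Kronecker product, and both finish with the fact that a Kronecker product vanishes iff one of its factors does. The only cosmetic difference is that the paper fully vectorizes, obtaining $\bm c_{G,n}^\top\bigl(\bigotimes_{k}\matU_{G,k}^\top\bm V_{n,k}\bigr)\bm c_{L,n}=0$ in one step and then using $\|\bigotimes_k M_k\|_F^2=\prod_k\|M_k\|_F^2$ to conclude, whereas your mode-$1$ unfolding inserts an intermediate $M$-versus-$N$ dichotomy before arriving at the same Kronecker-zero fact.
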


Given Proposition \ref{prop:orthogonal}, we require local factor matrices to be orthogonal to global factor matrices for all sources in at least one mode. We define the set of such orthogonal modes by $\mathcal K$, $|\mathcal K|\ge 1$.
Then our objective is to minimize the reconstruction loss of the data across all $N$ sources. This is written as
{\small \begin{align}\label{eq:model}
    \min_{\{\bm{\mathcal C}_{G,n}\}, \{\matU_{G,k}\}, \{\bm{\mathcal C}_{L,n}\}, \{\bm{V}_{n,k}\}} &\sum_{n=1}^N \|\bm{\mathcal Y}_n - \bm{\mathcal C}_{G,n}\times_1 \matU_{G,1}\ldots \times_K \matU_{G,K}- \bm{\mathcal C}_{L,n}\times_1 \bm{V}_{n,1}  \ldots \times_K \bm{V}_{n,K} \|_F^2\\
    &s.t.~ \matU_{G,k}^\top\matU_{G,k}=I, \bm{V}_{n,k}^\top \bm{V}_{n,k} = I, n=1,\ldots,N, k=1,\ldots,K\notag\\
    & \matU_{G,k}^\top\bm{V}_{n,k}=0, n=1,\ldots,N, k\in\mathcal K.\notag
\end{align}}

We assume that the dimension of the global core tensor for all sources is $\tsC_{G,n}\in\mathbb R^{g_1\times\cdots\times g_K}$, and the dimension of the local core tensor for source $n$ is $\tsC_{L,n}\in\mathbb R^{l_{n,1}\times\cdots\times l_{n,K}}$. This also defines the dimension of the global and local factor matrices.


\subsection{Personalized Tucker algorithm}\label{subsec:Algorithm}

A natural algorithm to solve the objective in \eqref{eq:model} is block coordinate descent (BCD), where we iteratively optimize each variable. A general framework for BCD in our context is outlined in Algorithm \ref{alg:Pseudo_perTucker_BCD}.

\begin{algorithm}[h]
\caption{Pseudo Code of the Algorithm}\label{alg:Pseudo_perTucker_BCD}\footnotesize
\KwData{$\bm{\mathcal Y}_n$, $n=1,\ldots,N$}
\KwOutput{$\{ \bm{\mathcal C}_{G,n}\}, \{\matU_{G,k}\}, \{\bm{\mathcal C}_{L,n}\}, \{\bm{V}_{n,k}\}$}
\textbf{Initialization}: 
$\{ \bm{\mathcal C}_{G,n}\}, \{\matU_{G,k}\}, \{\bm{\mathcal C}_{L,n}\}, \{\bm{V}_{n,k}\}$\\ 
\For{iterations}{
    \For{$k=1,\ldots,K$}{
    Update global factor matrices $\{\matU_{G,k}\}$\\
        \For{$n=1,\ldots,N$}{
        Update global core tensors $\{ \bm{\mathcal C}_{G,n}\}$\\
        Update local factor matrices $\{\bm{V}_{n,k}\}$.\\ 
        Update core tensors $\{\bm{\mathcal C}_{L,n}\}$
        }
    }
}
\textbf{Return}: $\{ \bm{\mathcal C}_{G,n}\}, \{\matU_{G,k}\}, \{\bm{\mathcal C}_{L,n}\}, \{\bm{V}_{n,k}\}$
\end{algorithm}

In the rest of Sec. \ref{subsec:Algorithm}, we explain the update steps in Algorithm \ref{alg:Pseudo_perTucker_BCD} in detail. We start with the update of global and local core tensors in Sec. \ref{subsubsec:update_core} since the closed-form solution is a direct projection similar to the traditional Tucker decomposition owing to the orthogonality between global and local components. Then the solution consistently holds in the update of global and local factor matrices we introduced in Sec. \ref{subsubsec:update_global} and Sec. \ref{subsubsec:update_local}. This simplifies the update of the factor matrices without the core tensors. Despite the challenges that pertain to the two distinct decomposition components within \eqref{eq:model} and their orthogonality, \textit{a key result is that all updates can be done in closed form owing to the nice properties of the orthogonality constraint imposed on the model}.



\subsubsection{Update global and local core tensors}\label{subsubsec:update_core}
In this section, Proposition \ref{prop:Core} provides the closed-form solution to update global and local core tensors, given the global and local factor matrices. The closed-form solution is the direct projection of the data to the global or local factor matrices. When updating the global and local factor matrices, we assume that the global and local core tensors are always the optimal solution. This simplifies the formula to update the global and local factor matrices by removing the core tensors from the optimization problem.
\begin{proposition}\label{prop:Core}
(Closed-form solutions to the core tensor) If $\abs{\setK}\ge 1$, when the global factor matrices $\{\matU_{G,k}\}$ and the local factor matrices $\{\bm{V}_{n,k}\}$ are given, the global core tensors $\tsC^{\star}_{G,n}$ that minimize \eqref{eq:model} satisfy
$$
\tsC^{\star}_{G,n}=\bm{\mathcal Y}_n \times_1 \matU_{G,1}^\top\ldots \times_K \matU_{G,K}^\top,
$$
and the local core tensors $\tsC^{\star}_{L,n}$ that minimize \eqref{eq:model} satisfy
$$
\tsC^{\star}_{L,n}=\bm{\mathcal Y}_n \times_1 \bm{V}_{n,1}^\top \ldots \times_K \bm{V}_{n,K}^\top.
$$
\end{proposition}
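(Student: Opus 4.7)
The plan is to decouple the two reconstruction terms in the objective \eqref{eq:model} for each source $n$, using the fact that the orthogonality assumption $\matU_{G,k}^\top \bm{V}_{n,k}=0$ for some $k\in\setK$ makes the global and local reconstructions orthogonal in the tensor inner product. Writing $\bm{\mathcal Y}_{G,n}=\bm{\mathcal C}_{G,n}\times_1\matU_{G,1}\ldots\times_K\matU_{G,K}$ and $\bm{\mathcal Y}_{L,n}=\bm{\mathcal C}_{L,n}\times_1\bm{V}_{n,1}\ldots\times_K\bm{V}_{n,K}$, I would first expand
\[
\|\bm{\mathcal Y}_n-\bm{\mathcal Y}_{G,n}-\bm{\mathcal Y}_{L,n}\|_F^2
=\|\bm{\mathcal Y}_n\|_F^2-2\langle\bm{\mathcal Y}_n,\bm{\mathcal Y}_{G,n}\rangle-2\langle\bm{\mathcal Y}_n,\bm{\mathcal Y}_{L,n}\rangle+\|\bm{\mathcal Y}_{G,n}\|_F^2+\|\bm{\mathcal Y}_{L,n}\|_F^2,
\]
where the cross term $2\langle\bm{\mathcal Y}_{G,n},\bm{\mathcal Y}_{L,n}\rangle$ vanishes by Proposition \ref{prop:orthogonal}. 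This decouples the minimization over $\bm{\mathcal C}_{G,n}$ and $\bm{\mathcal C}_{L,n}$, so it suffices to treat the global case; the local case is completely analogous by the symmetry of the two terms.

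Next I would invoke two standard properties of the mode-$k$ product. First, the adjoint identity $\langle \bm{\mathcal A}\times_k\matU,\bm{\mathcal B}\rangle=\langle\bm{\mathcal A},\bm{\mathcal B}\times_k\matU^\top\rangle$ applied successively in each mode yields
\[
\langle\bm{\mathcal Y}_n,\bm{\mathcal Y}_{G,n}\rangle
=\langle\bm{\mathcal Y}_n\times_1\matU_{G,1}^\top\ldots\times_K\matU_{G,K}^\top,\bm{\mathcal C}_{G,n}\rangle.
\]
Second, the semi-orthonormality $\matU_{G,k}^\top\matU_{G,k}=I$ for every $k$ implies, via the Kronecker-product form of the mode-$k$ unfolding recalled in Sec.~\ref{subsec:Preliminary}, that $\|\bm{\mathcal Y}_{G,n}\|_F^2=\|\bm{\mathcal C}_{G,n}\|_F^2$. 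Substituting these two reductions, the part of the objective that depends on $\bm{\mathcal C}_{G,n}$ becomes
\[
\|\bm{\mathcal C}_{G,n}\|_F^2-2\langle\bm{\mathcal Y}_n\times_1\matU_{G,1}^\top\ldots\times_K\matU_{G,K}^\top,\bm{\mathcal C}_{G,n}\rangle,
\]
which is a quadratic in $\bm{\mathcal C}_{G,n}$ with no constraints.

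Completing the square (or taking the gradient in $\bm{\mathcal C}_{G,n}$ and setting it to zero) gives the unique minimizer
\[
\tsC^{\star}_{G,n}=\bm{\mathcal Y}_n\times_1\matU_{G,1}^\top\ldots\times_K\matU_{G,K}^\top,
\]
proving the first claim. Running the identical argument with $\matU_{G,k}$ replaced by $\bm{V}_{n,k}$ delivers the second claim. The main obstacle I anticipate is being careful with the bookkeeping of the adjoint identity across all $K$ modes and with invoking Proposition \ref{prop:orthogonal} at the right place, since the orthogonality only holds globally (as a tensor inner product) even though it is witnessed by a single mode $k\in\setK$; once the cross-term is killed, the rest is a textbook projection argument.
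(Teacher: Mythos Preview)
Your proof is correct. Both your argument and the paper's rely on the same key observation---that the orthogonality assumption kills the interaction between the global and local reconstructions---but the executions differ. The paper vectorizes each $\bm{\mathcal Y}_n$, stacks the Kronecker products $\bigotimes_k\matU_{G,k}$ and $\bigotimes_k\bm{V}_{n,k}$ into a single design matrix, and then solves the joint least-squares problem via the normal equations; the decoupling appears only at the end, when the off-diagonal blocks of the Gram matrix vanish because $\bigotimes_k\matU_{G,k}^\top\bm{V}_{n,k}=0$. You instead invoke Proposition~\ref{prop:orthogonal} up front to drop the cross term $\langle\bm{\mathcal Y}_{G,n},\bm{\mathcal Y}_{L,n}\rangle$, which separates the two subproblems immediately, and then finish each one at the tensor level via the adjoint identity and completing the square. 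Your route is slightly more direct and avoids the vectorization bookkeeping; the paper's route makes the least-squares structure (and hence uniqueness of the minimizer) more explicit.
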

The closed-form solutions presented in Proposition \ref{prop:Core} takes advantage of the orthogonality between the two components. As a result, the cross-term is canceled, making the computation of the core tensors for both components efficient and straightforward. In the following sections, $\tsC^{\star}_{G,n}$ and $\tsC^{\star}_{L,n}$ are used to denote optimized global and local core tensors.

\subsubsection{Update global and local factor matrices} \label{subsubsec:update_factor}

In this section, we will discuss the closed-form solutions to update the global and local factor matrices. For the simplicity of notation, we define the global residual tensor and local residual tensor from each source $n=1\ldots,N$ as:
$$
\tsR_{G,n} = \bm{\mathcal Y}_n - \tsC^{\star}_{L,n}\times_1 \bm{V}_{n,1}\ldots\times_K \bm{V}_{n,K},
$$
$$
\tsR_{L,n} = \bm{\mathcal Y}_n - \tsC^{\star}_{G,n}\times_1 \matU_{G,1}\ldots\times_K \matU_{G,K}.
$$
The global residual is the reconstruction error from local components and the local residual is the reconstruction error from global components. Therefore, global reconstruction tends to model the global residual, and local reconstruction tends to model the local residual.
\paragraph{Proximal update}
In practice, when updating the global and local factor matrices, we can incorporate a proximal term into the optimization problem to regulate the update of the factor matrices \citep{shen2022smooth}. More specifically, we can define $\varrho$ as the subspace difference between the subspaces expanded by the current factor matrix $\matU_t$ and the target factor matrix $\matU$ to be optimized,
\begin{equation}
\label{eqn:rhodef}
    \varrho (\matU, \matU_t)= \|\bm{UU}^\top-\matU_t\matU_t^\top\|_F^2.
\end{equation}

The proximal penalty term is defined as the subspace difference times some parameter $\rho$. The proximal gradient algorithm can stabilize the update of factor matrices by regularizing the subspace change. Since the reconstruction of global and local components are to minimize the reconstruction error, we can write the optimization problem to solve for the global factor matrix in mode $k$ at iteration $t$ as
    \begin{equation}    \label{eq:global-opt} \small \matU_{G,k,t+1}=\arg\min_{\matU_{G,k}} \sum_{n=1}^N \norm{\tsR_{G,n} - \tsC^{\star}_{G,n} \times_1 \matU_{G,1}\ldots\times_K \matU_{G,K}}_F^2 +\rho \|\matU_{G,k}\matU_{G,k}^\top-\matU_{G,k,t}{\matU_{G,k,t}^\top}\|_F^2,
    \end{equation}
and the optimization problem to solve for the local factor matrix of source $n$ is that when $k\in\mathcal K$,
    \begin{equation}\label{eq:local_opt_insetk}
    \matV_{n,k,t+1}=\arg\min_{\bm{V}_{n,k}\perp \matU_{G,k}} \norm{\tsR_{L,n} - \tsC^{\star}_{L,n} \times_1 \bm{V}_{n,1}\ldots\times_K \bm{V}_{n,K}}_F^2 +\rho \norm{\bm{V}_{n,k}\bm{V}_{n,k}^\top-\bm{V}_{n,k,t}{\bm{V}_{n,k,t}^\top}}_F^2,
    \end{equation}
and when $k\not\in\mathcal K$,
\begin{equation}\label{eq:local_opt_notinsetk}
    \matV_{n,k,t+1}=\arg\min_{\bm{V}_{n,k}} \norm{\tsR_{L,n} - \tsC^{\star}_{L,n} \times_1 \bm{V}_{n,1}\ldots\times_K \bm{V}_{n,K}}_F^2 +\rho \norm{\bm{V}_{n,k}\bm{V}_{n,k}^\top-\bm{V}_{n,k,t}{\bm{V}_{n,k,t}^\top}}_F^2,
    \end{equation}
where $\matU_{G,k,t}$ and $\matV_{n,k,t}$ represents the global and local factor matrices for source $n$, mode $k$ and iteration $t$; $\matU_{G,k,t+1}$ and $\matV_{n,k,t+1}$ are the corresponding updated global and local factor matrices. The objectives of \eqref{eq:local_opt_insetk} and \eqref{eq:local_opt_notinsetk} are the same. They consist of a Frobenius norm of the fitting error and a regularization on the change of subspace. The difference is that in \eqref{eq:local_opt_insetk}, we explicitly require $\matV_{n,k,t+1}$ to be orthogonal to $\matU_{G,k,t+1}$, while in \eqref{eq:local_opt_notinsetk} we do not add constraints on $\matV_{n,k,t+1}$. 

Though the optimization problems \eqref{eq:global-opt} to \eqref{eq:local_opt_notinsetk} seem complicated, it turns out we can obtain closed-form solutions. To achieve this, we first transform the minimization problem into a maximization problem and remove the core tensors in the optimization by Lemma \ref{lemma:subspace-equiv} and Lemma \ref{lemma:min-to-max}.

\begin{lemma}\label{lemma:subspace-equiv}
    For any orthonormal factor matrices $\matU$ and $\matU_t$, the subspace error between $\matU$ and $\matU_t$ defined in \eqref{eqn:rhodef} can be formulated as,
    \begin{equation}\label{eq:subspace-equiv}
   \varrho (\matU, \matU_t)  =2c-2Tr\left[\matU^\top\matU_t\matU_t^\top \matU\right],
    \end{equation}
    where $c$ is the number of rows in $\bm U$.
\end{lemma}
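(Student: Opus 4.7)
The plan is a direct algebraic expansion: use $\norm{M}_F^2 = Tr(M^\top M)$ to rewrite $\varrho(\matU,\matU_t)$ as a single trace, expand the resulting square into four trace terms, and then collapse each one using orthonormality of $\matU$ and $\matU_t$ together with the cyclic property of the trace.

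Concretely, I would begin from the observation that $\matU\matU^\top - \matU_t\matU_t^\top$ is symmetric, so
$$\varrho(\matU,\matU_t) = \norm{\matU\matU^\top - \matU_t\matU_t^\top}_F^2 = Tr\big[(\matU\matU^\top - \matU_t\matU_t^\top)^2\big].$$
Expanding the square produces two ``diagonal'' trace terms $Tr(\matU\matU^\top\matU\matU^\top)$ and $Tr(\matU_t\matU_t^\top\matU_t\matU_t^\top)$, together with two cross terms $-Tr(\matU\matU^\top\matU_t\matU_t^\top)$ and $-Tr(\matU_t\matU_t^\top\matU\matU^\top)$ which are equal to each other by cyclicity.

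For the diagonal terms, orthonormality ($\matU^\top\matU = I$) collapses $\matU\matU^\top\matU\matU^\top = \matU(\matU^\top\matU)\matU^\top = \matU\matU^\top$, and hence $Tr(\matU\matU^\top\matU\matU^\top) = Tr(\matU\matU^\top) = Tr(\matU^\top\matU) = c$, where $c$ is the size of the resulting identity (the rank of the projector $\matU\matU^\top$); the same reasoning handles $\matU_t$. For the cross terms, one more application of cyclicity yields $Tr(\matU\matU^\top\matU_t\matU_t^\top) = Tr(\matU^\top\matU_t\matU_t^\top\matU)$. Summing the four simplified contributions gives exactly $\varrho(\matU,\matU_t) = 2c - 2\,Tr(\matU^\top\matU_t\matU_t^\top\matU)$, which is \eqref{eq:subspace-equiv}.

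There is no genuine obstacle here — the argument is a one-line calculation once the two trace identities (Frobenius-as-trace and cyclic invariance) are invoked. The only care required is bookkeeping around the paper's tall-matrix, orthonormal-columns convention $\matU^\top\matU = I$, so that $c$ is correctly identified with the dimension of that identity (equivalently, the rank of the projector $\matU\matU^\top$) rather than with the ambient row dimension of $\matU$.
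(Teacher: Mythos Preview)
Your proof is correct and follows essentially the same approach as the paper: write $\norm{\matU\matU^\top-\matU_t\matU_t^\top}_F^2$ as a trace, expand, and simplify the diagonal terms via $\matU^\top\matU=I$ and the cross terms via cyclicity. Your closing remark about $c$ being the rank of the projector (i.e., the column dimension under the paper's tall, orthonormal-column convention $\matU^\top\matU=I$) is well taken---the paper's own statement labels $c$ as the number of rows, which is a slip.
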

Lemma \ref{lemma:subspace-equiv} shows that the subspace error is differentiable with respect to $\matU$. This property is useful when we design the update rules for the BCD algorithms and the evaluation metrics. Furthermore, lemma \ref{lemma:subspace-equiv} put a negative sign in the matrix trace term that can transform the minimization problem into a maximization problem. 

Before deriving the solutions to \eqref{eq:global-opt} to \eqref{eq:local_opt_notinsetk}, we introduce the following lemma that significantly simplifies our objective.
 

\begin{lemma}\label{lemma:min-to-max}
For each $n=1,\ldots,N$ and $k=1,\ldots,K$, we have,
\begin{equation}\small\label{eq:global-equiv} 
    \sum_{n=1}^N \|\tsR_{G,n} - \tsC^{\star}_{G,n} \times_1 \matU_{G,1}\ldots\times_K \matU_{G,K} \|_F^2=-\sum_{n=1}^N \|\tsR_{G,n}\times_1 \matU_{G,1}^\top\ldots\times_K \matU_{G,K}^\top \|_F^2+\norm{\tsR_{G,n}}_F^2,
\end{equation}
\begin{equation}\label{eq:local-equiv}
    \|\tsR_{L,n} - \tsC^{\star}_{L,n} \times_1 \bm{V}_{n,1}\ldots \times_K \bm{V}_{n,K} \|_F^2 = -\|\tsR_{L,n}\times_1 \bm{V}_{n,1}^\top\ldots\times_K \bm{V}_{n,K}^\top\|_F^2+\norm{\tsR_{L,n}}_F^2.
\end{equation}

\end{lemma}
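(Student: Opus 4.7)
The plan is to expand the squared Frobenius norm via the tensor inner product and then exploit two facts: (i) the optimal core tensors from Proposition \ref{prop:Core} can be rewritten in terms of the residual tensors thanks to the mode orthogonality on $\setK$, and (ii) mode products with orthonormal factor matrices preserve the Frobenius norm of the core.

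First I would establish the key identity that the mode-wise projection of the residual agrees with the optimal core, namely
\begin{equation*}
\tsR_{G,n}\times_1 \matU_{G,1}^\top\ldots\times_K \matU_{G,K}^\top=\tsCst_{G,n},\qquad \tsR_{L,n}\times_1 \bm{V}_{n,1}^\top\ldots\times_K \bm{V}_{n,K}^\top=\tsCst_{L,n}.
\end{equation*}
By definition $\tsR_{G,n}=\tsY_n-\tsCst_{L,n}\times_1 \bm{V}_{n,1}\ldots\times_K \bm{V}_{n,K}$, and projecting the subtracted term onto $\matU_{G,1}^\top,\ldots,\matU_{G,K}^\top$ contains, at the mode $k^\star\in\setK$, the factor $\matU_{G,k^\star}^\top\bm{V}_{n,k^\star}=0$, so the whole projection vanishes. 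The first identity then follows from the closed form in Proposition \ref{prop:Core}; the local identity is symmetric.

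Next I would expand
\begin{equation*}
\norm{\tsR_{G,n}-\tsCst_{G,n}\times_1 \matU_{G,1}\ldots\times_K \matU_{G,K}}_F^2=\norm{\tsR_{G,n}}_F^2-2\langle \tsR_{G,n},\,\tsCst_{G,n}\times_1 \matU_{G,1}\ldots\times_K \matU_{G,K}\rangle+\norm{\tsCst_{G,n}\times_1 \matU_{G,1}\ldots\times_K \matU_{G,K}}_F^2.
\end{equation*}
Using the adjoint property of the mode-$k$ product (i.e.\ $\langle \tsA,\tsB\times_k \matU\rangle=\langle \tsA\times_k \matU^\top,\tsB\rangle$) iteratively across all modes, the cross term equals $\langle \tsCst_{G,n},\tsCst_{G,n}\rangle=\norm{\tsCst_{G,n}}_F^2$. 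Since each $\matU_{G,k}$ is orthonormal, the last term also equals $\norm{\tsCst_{G,n}}_F^2$. Combining the two gives $\norm{\tsR_{G,n}}_F^2-\norm{\tsCst_{G,n}}_F^2$, and substituting the first identity from the previous paragraph for $\tsCst_{G,n}$ yields \eqref{eq:global-equiv} (summed over $n$). The derivation of \eqref{eq:local-equiv} is identical once the corresponding residual-projection identity is in place.

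The main obstacle, and the only place where a nonroutine argument is needed, is verifying that the local contribution in $\tsR_{G,n}$ is annihilated by the global projection (and vice versa). This is where the assumption $\abs{\setK}\ge 1$ enters: a single orthogonal mode is enough to kill the cross contribution because mode products commute across distinct modes, and a zero factor in any mode zeros the entire tensor. Once that observation is in place, the remainder is straightforward bookkeeping with orthonormal mode products.
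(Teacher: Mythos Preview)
Your proposal is correct and follows essentially the same route as the paper: expand the squared Frobenius norm, use the adjoint property of mode products on the cross term, use column-orthonormality on the quadratic term, and invoke the identity $\tsR_{G,n}\times_1\matU_{G,1}^\top\ldots\times_K\matU_{G,K}^\top=\tsCst_{G,n}$ obtained from Proposition~\ref{prop:Core} together with the mode-$\setK$ orthogonality. The only cosmetic difference is that you isolate the residual-projection identity up front, whereas the paper derives it inline when simplifying the second and third terms.
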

Lemma \ref{lemma:min-to-max} bears two fundamental meanings in the derivation of the closed-form solution to update the global and local factor matrices. First, it also puts a negative sign in the term with the factor matrices, which can transform the minimization problem \eqref{eq:global-opt} to \eqref{eq:local_opt_notinsetk} into a maximization problem. Second, by plugging in the closed-form solution of global and local core tensors in Proposition \ref{prop:Core}, it simplifies the optimization problem by reducing the number of decision variables. 

\paragraph{Update global factors}\label{subsubsec:update_global}
With all the prerequisites, we are now ready to present the closed-form solution of the sub-problem \eqref{eq:global-opt} in updating the global factor matrix $\matU_{G,k}$ in a specific mode $k$ in Proposition \ref{prop:Global_prox}


\begin{proposition}\label{prop:Global_prox}
We use $\bm{W}_{G,n}$ to denote $\bm{W}_{G,n}=\left(\tsR_{G,n}\right)_{(k)}(\bigotimes_{q\neq k}\matU_{G,q}^\top)^\top$, where $\bigotimes_{q\neq k}$ is the Kronecker product in reverse order of the factor matrices except $k$th factor matrix. If $\matU_{G,k,t+1}$ is the optimal solution to
\eqref{eq:global-opt}, the columns of $\matU_{G,k,t+1}$
    are the unit eigenvectors of the matrix $\sum_{n=1}^N \bm{W}_{G,n} \bm{W}_{G,n}^\top +2\rho \matU_{G,k,t} {\matU_{G,k,t}^\top}$ corresponding to the largest $g_k$ eigenvalues.
\end{proposition}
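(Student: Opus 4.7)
The plan is to reduce the constrained minimization in \eqref{eq:global-opt} to a standard trace-maximization problem over the Stiefel manifold, then invoke the Ky Fan (Rayleigh--Ritz) characterization of the maximizer. Since the orthogonality constraint $\matU_{G,k}^\top \matU_{G,k} = I$ is implicit in the problem (the global factor matrices are required to be orthonormal throughout), I would treat the proximal update as an optimization on the Stiefel manifold of orthonormal $I_k \times g_k$ matrices.

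First, I would simplify the reconstruction term using Lemma \ref{lemma:min-to-max}. Equation \eqref{eq:global-equiv} shows that minimizing $\sum_n \norm{\tsR_{G,n} - \tsC^{\star}_{G,n} \times_1 \matU_{G,1}\ldots\times_K \matU_{G,K}}_F^2$ over $\matU_{G,k}$ is equivalent, up to a constant in $\matU_{G,k}$, to maximizing $\sum_n \|\tsR_{G,n}\times_1 \matU_{G,1}^\top\ldots\times_K \matU_{G,K}^\top \|_F^2$. Second, I would rewrite the proximal term via Lemma \ref{lemma:subspace-equiv}, which gives
\[
\|\matU_{G,k}\matU_{G,k}^\top - \matU_{G,k,t}\matU_{G,k,t}^\top\|_F^2 = 2g_k - 2\,\mathrm{Tr}\!\left[\matU_{G,k}^\top \matU_{G,k,t}\matU_{G,k,t}^\top \matU_{G,k}\right],
\]
so again up to constants the minimization becomes a maximization of the trace term.

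Next, I would convert the tensor norm $\|\tsR_{G,n}\times_1 \matU_{G,1}^\top\ldots\times_K \matU_{G,K}^\top \|_F^2$ into a matrix form by unfolding along mode $k$. Using the standard identity $(\tsX\times_1 \matA_1\cdots\times_K \matA_K)_{(k)} = \matA_k \tsX_{(k)} (\matA_K \otimes \cdots \otimes \matA_{k+1}\otimes \matA_{k-1}\otimes \cdots \otimes \matA_1)^\top$, this norm equals
\[
\big\|\matU_{G,k}^\top (\tsR_{G,n})_{(k)} \big(\bigotimes_{q\ne k}\matU_{G,q}\big)\big\|_F^2 = \|\matU_{G,k}^\top \bm{W}_{G,n}\|_F^2 = \mathrm{Tr}\!\left[\matU_{G,k}^\top \bm{W}_{G,n}\bm{W}_{G,n}^\top \matU_{G,k}\right],
\]
with $\bm{W}_{G,n}$ as defined in the statement. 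Summing over $n$ and combining with the proximal trace term, the overall problem reduces to
\[
\max_{\matU_{G,k}^\top \matU_{G,k}=I}\ \mathrm{Tr}\!\left[\matU_{G,k}^\top \Big(\sum_{n=1}^N \bm{W}_{G,n}\bm{W}_{G,n}^\top + 2\rho\, \matU_{G,k,t}\matU_{G,k,t}^\top\Big) \matU_{G,k}\right].
\]

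Finally, since the matrix $M := \sum_{n=1}^N \bm{W}_{G,n}\bm{W}_{G,n}^\top + 2\rho \matU_{G,k,t}\matU_{G,k,t}^\top$ is symmetric positive semidefinite, the Ky Fan maximum principle yields that the optimum over orthonormal $\matU_{G,k}\in \mathbb{R}^{I_k\times g_k}$ is attained by stacking the unit eigenvectors of $M$ associated with its $g_k$ largest eigenvalues, which is exactly the claim. The main obstacle I anticipate is bookkeeping in the mode-$k$ unfolding step, particularly making sure that the Kronecker product is arranged in the reverse order so as to match the definition of $\bm{W}_{G,n}$; once that identification is made, the remaining steps are direct applications of the two lemmas and the Ky Fan theorem.
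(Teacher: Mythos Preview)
Your proposal is correct and follows essentially the same route as the paper: both apply Lemma~\ref{lemma:subspace-equiv} and Lemma~\ref{lemma:min-to-max} to convert \eqref{eq:global-opt} into the single trace-maximization $\max_{\matU_{G,k}^\top\matU_{G,k}=I}\mathrm{Tr}\big[\matU_{G,k}^\top\big(\sum_n \bm{W}_{G,n}\bm{W}_{G,n}^\top+2\rho\,\matU_{G,k,t}\matU_{G,k,t}^\top\big)\matU_{G,k}\big]$ via the mode-$k$ unfolding identity, and then invoke the standard eigenvector characterization. The only cosmetic difference is that you explicitly name the last step as the Ky Fan principle, whereas the paper states the conclusion directly.
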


Proposition \ref{prop:Global_prox} shows that with proximal regularization, global components can be updated efficiently through singular value decomposition. In practice, we can use the equivalent form of $\bm{W}_{G,n}=(\tsR_{G,n}\times_1 \matU_{G,1} \cdots \times_{k-1} \matU_{G,k-1} \times_{k+1} \matU_{G,k+1}\cdots \times_K \matU_{G,K})_{(k)}$ to improve the efficiency of the computation.
Note that when updating the global components, we do not impose the orthogonality of the global and local components. The reason is that enforcing the orthogonality between the global components to each of the local components is too restrictive and may leave no feasible space for updating if the number of sources is large. Therefore, we will update the global components freely and enforce the local component to be orthogonal to the global components.

\paragraph{Update local factors}\label{subsubsec:update_local}

We provide the closed-form solution to the sub-problem \eqref{eq:local_opt_insetk} and \eqref{eq:local_opt_notinsetk} to update the local factor matrices with or without the orthogonal constraint in  Proposition \ref{prop:Local_prox}. The component optimized in Proposition \ref{prop:Local_prox} is the local factor matrix $\matV_{n,k}$ with a specific source $n$ and mode $k$. We denote the current local factor matrices at iteration $t$ by $\bm{V}_{n,k,t}$.

\begin{proposition}
\label{prop:Local_prox}
Problem \eqref{eq:local_opt_insetk} and \eqref{eq:local_opt_notinsetk} have closed-form solutions.
We denote $\bm{W}_{L,n}$ as $\bm{W}_{L,n}=\left(\tsR_{L,n}\right)_{(k)}(\bigotimes_{q\neq k}\bm{V}_{n,q}^\top)^\top$.
Then,    \begin{enumerate}
        \item if $k\not\in \mathcal K$, the updated columns of the local factor matrix $\bm{V}_{n,k,t+1}$ is the unit eigenvectors of $\bm{W}_{L,n} \bm{W}_{L,n}^\top + 2\rho \bm{V}_{n,k,t}{\bm{V}_{n,k,t}^\top}$ corresponding to top $l_{n,k}$ eigenvalues.

    \item if $k\in\mathcal K$, the update of the local factor matrix $\bm{V}_{n,k,t+1}$ is as follows.\\
    Denote $\bm{S}'=(I-\matU_{G,k}\matU_{G,k}^\top)[\bm{W}_{L,n} \bm{W}_{L,n}^\top + 2\rho \bm{V}_{n,k,t}{\bm{V}_{n,k,t}^\top}](I-\matU_{G,k}\matU_{G,k}^\top)$. The columns of the local factor matrix $\bm{V}_{n,k,t+1}$ are the eigenvectors of $\bm{S}'$ corresponding to top $l_{n,k}$ eigenvalues.    
    \end{enumerate}
\end{proposition}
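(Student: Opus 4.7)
The plan is to mirror the proof of Proposition \ref{prop:Global_prox}: reduce both sub-problems to a trace maximization on the Stiefel manifold, then handle the extra orthogonality-to-$\matU_{G,k}$ constraint by projection onto the feasible subspace.

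First, by Lemma \ref{lemma:min-to-max} the fitting error equals $\|\tsR_{L,n}\|_F^2 - \|\tsR_{L,n}\times_1 \bm{V}_{n,1}^\top\cdots\times_K \bm{V}_{n,K}^\top\|_F^2$, whose first term is constant in $\bm{V}_{n,k}$. By Lemma \ref{lemma:subspace-equiv} the proximal penalty equals $-2\rho\, Tr(\bm{V}_{n,k}^\top \bm{V}_{n,k,t}\bm{V}_{n,k,t}^\top \bm{V}_{n,k})$ plus a constant. Mode-$k$ unfolding via the Tucker identity gives $(\tsR_{L,n}\times_1 \bm{V}_{n,1}^\top\cdots\times_K \bm{V}_{n,K}^\top)_{(k)}=\bm{V}_{n,k}^\top \bm{W}_{L,n}$, so the remaining quadratic term equals $Tr(\bm{V}_{n,k}^\top \bm{W}_{L,n}\bm{W}_{L,n}^\top \bm{V}_{n,k})$. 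Collecting terms and discarding constants, both \eqref{eq:local_opt_insetk} and \eqref{eq:local_opt_notinsetk} become
\begin{equation*}
\max_{\bm{V}_{n,k}^\top \bm{V}_{n,k}=I} \; Tr\!\bigl(\bm{V}_{n,k}^\top \bm{M} \bm{V}_{n,k}\bigr), \qquad \bm{M} := \bm{W}_{L,n}\bm{W}_{L,n}^\top + 2\rho \bm{V}_{n,k,t}\bm{V}_{n,k,t}^\top,
\end{equation*}
with the additional constraint $\matU_{G,k}^\top \bm{V}_{n,k}=0$ when $k\in\mathcal K$.

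For $k\notin\mathcal K$, this is an unconstrained trace maximization on the Stiefel manifold whose solution, by the Ky Fan/Rayleigh--Ritz theorem, is given by $l_{n,k}$ unit eigenvectors of the symmetric matrix $\bm{M}$ associated with its largest $l_{n,k}$ eigenvalues, proving claim 1. For $k\in\mathcal K$, introduce the orthogonal projector $\bm P := I - \matU_{G,k}\matU_{G,k}^\top$ onto $\mathrm{Col}(\matU_{G,k})^\perp$. The orthogonality constraint is equivalent to $\bm P \bm{V}_{n,k}=\bm{V}_{n,k}$, under which $Tr(\bm{V}_{n,k}^\top \bm{M} \bm{V}_{n,k}) = Tr(\bm{V}_{n,k}^\top \bm P \bm{M} \bm P \bm{V}_{n,k}) = Tr(\bm{V}_{n,k}^\top \bm{S}' \bm{V}_{n,k})$. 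Dropping the projection constraint and applying Ky Fan to $\bm{S}'$ suggests the top $l_{n,k}$ eigenvectors of $\bm{S}'$.

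The main obstacle is verifying that this relaxation is tight: the top eigenvectors of $\bm{S}'$ must themselves satisfy $\matU_{G,k}^\top \bm{V}_{n,k,t+1}=0$ to be feasible for the original problem. This will follow because $\bm{S}'=\bm P \bm{M} \bm P$ annihilates $\mathrm{Col}(\matU_{G,k})$ on both sides, so $\mathrm{Col}(\matU_{G,k}) \subset \mathrm{Null}(\bm{S}')$; consequently every eigenvector of $\bm{S}'$ with a nonzero eigenvalue lies in $\mathrm{Range}(\bm P)$, and provided $l_{n,k}$ does not exceed the rank of $\bm P$ the top $l_{n,k}$ eigenvectors can be chosen entirely within $\mathrm{Range}(\bm P)$. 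These choices automatically satisfy the constraint, making the relaxation tight and establishing claim 2.
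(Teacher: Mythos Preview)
Your proposal is correct and follows essentially the same route as the paper: the paper also reduces \eqref{eq:local_opt_insetk}--\eqref{eq:local_opt_notinsetk} via Lemmas \ref{lemma:subspace-equiv} and \ref{lemma:min-to-max} to $\max_{\bm V_{n,k}^\top\bm V_{n,k}=I} Tr(\bm V_{n,k}^\top \bm S\,\bm V_{n,k})$ with $\bm S=\bm W_{L,n}\bm W_{L,n}^\top+2\rho\bm V_{n,k,t}\bm V_{n,k,t}^\top$, then handles the constrained case by exactly your projection-and-relaxation argument (stated separately as a lemma). Your justification of tightness---that $\mathrm{Col}(\matU_{G,k})\subset\mathrm{Null}(\bm S')$ so nonzero-eigenvalue eigenvectors lie in $\mathrm{Range}(\bm P)$---is equivalent to the paper's observation that $\lambda\bm x^\top\matU_{G,k}=\bm x^\top\bm S'\matU_{G,k}=0$ for any eigenpair $(\lambda,\bm x)$ of $\bm S'$.
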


The proof of Proposition \ref{prop:Global_prox} and Proposition \ref{prop:Local_prox} is shown in Appendix \ref{App:proof_global_prox} and Appendix \ref{App:proof_local_prox}. Having completed all the steps required to update each component of the algorithm, we present the complete algorithm in Algorithm \ref{alg:perTucker_BCD}. In Algorithm \ref{alg:perTucker_BCD}, we use the subscript $t$ to denote the current iteration index for the global and local factor matrices. Despite the orthogonality constraint between the local and global components, each update step in Algorithm \ref{alg:perTucker_BCD} can be efficiently implemented via a closed-form solution. 

\begin{algorithm}[h]
\footnotesize
\caption{BCD algorithm to solver Personalized Tucker}\label{alg:perTucker_BCD}
\KwData{$\bm{\mathcal Y}_n$, $n=1,\ldots,N$}
\KwInput{Global dimensions $g_k$, Local dimensions $l_{n,k}$, Orthogonal dimension set $\mathcal K$, $\rho$}
\KwOutput{$\{ \bm{\mathcal C}_{G,n}\}, \{\matU_{G,k}\}, \{\bm{\mathcal C}_{L,n}\}, \{\bm{V}_{n,k}\}$}
\textbf{Initialization}: Randomly initialize or PCA initialize $\{ \bm{\mathcal C}_{G,n}\}, \{\matU_{G,k,0}\}, \{\bm{\mathcal C}_{L,n}\}, \{\bm{V}_{n,k,0}\}$.\\ 
\For{iterations $t=0,\cdots,T-1$}{
    \For{$k=1,\ldots,K$}{
        Set $\tsR_{G,n} = \bm{\mathcal Y}_n - \bm{\mathcal C}^{\star}_{L,n}\times_1 \bm{V}_{n,1,t+1}\ldots \times_{k-1}\bm{V}_{n,k-1,t+1} \times_{k}\bm{V}_{n,k,t} \ldots \times_K \bm{V}_{n,K}$, $n=1,\ldots,N$\\
        Compute $\bm{W}_{G,n}=\left(\tsR_{G,n}\right)_{(k)}(\matU_{G,K,t}^\top \bigotimes \ldots \bigotimes \matU_{G,k+1,t}^\top\bigotimes \matU_{G,k-1,t+1}^\top \bigotimes \ldots \bigotimes \matU_{G,1,t+1}^\top)^\top$, $n=1,\ldots,N$\\    
        Update $\matU_{G,k,t+1}$ to be the eigenvectors of  $\sum_{n=1}^N \bm{W}_{G,n} \bm{W}_{G,n}^\top +2\rho \matU_{G,k,t} {\matU_{G,k,t}^\top}$ corresponding to the largest $g_k$ eigenvalues.  \\
        
        \For{$n=1,\ldots,N$}{
        Update $\bm{\mathcal C}^{\star}_{G,n}=\bm{\mathcal Y}_n \times_1 \matU_{G,1,t+1}^\top\ldots \times_k \matU_{G,k,t+1}^\top \times_{k+1} \matU_{G,k+1,t}^\top \ldots \times_K \matU_{G,K,t}^\top$\\
        Set $\tsR_{L,n} = \bm{\mathcal Y}_n - \bm{\mathcal C}^{\star}_{G,n}\times_1 \matU_{G,1,t+1}\ldots \times_k \matU_{G,k,t+1} \times_{k+1} \matU_{G,k+1,t} \ldots \times_K \matU_{G,K,t}$\\
        Let $\bm{W}_{L,n}=\left(\tsR_{L,n}\right)_{(k)}(\matV_{n,K,t}^\top \bigotimes \ldots \bigotimes \matV_{n,k+1,t}^\top\bigotimes \matV_{n,k-1,t+1}^\top \bigotimes \ldots \bigotimes \matV_{n,1,t+1}^\top)^\top$\\
        \uIf{$k\in\mathcal K$}{
            Let $\bm{S}'=(\matI-\matU_{G,k,t+1}\matU_{G,k,t+1}^\top)[\bm{W}_{L,n} \bm{W}_{L,n}^\top + 2\rho \bm{V}_{n,k,t}{\bm{V}_{n,k,t}^\top}](\matI-\matU_{G,k,t+1}\matU_{G,k,t+1}^\top)$\\
            Update $\bm{V}_{n,k,t+1}$ to be the eigenvectors of $\bm{S}'$ corresponding to the largest $l_{n,k}$ eigenvalues.
        } \ElseIf{$k\not\in\mathcal K$}{
            Update $\bm{V}_{n,k,t+1}$ to be the eigenvectors of $\bm{W}_{L,n} \bm{W}_{L,n}^\top + 2\rho \bm{V}_{n,k,t}{\bm{V}_{n,k,t}^\top}$ corresponding to the largest $l_{n,k}$ eigenvalues.
        }
        Update $\bm{\mathcal C}^{\star}_{L,n}=\bm{\mathcal Y}_n \times_1 \bm{V}_{n,1,t+1}^\top \ldots \times_k \bm{V}_{n,k,t+1}^\top \times_{k+1} \bm{V}_{n,k+1,t}^\top \times_K \bm{V}_{n,K,t}^\top$
    }
    }

}
\textbf{Return}: $\{ \bm{\mathcal C}^{\star}_{G,n}\}, \{\matU_{G,k,T}\}, \{\bm{\mathcal C}^{\star}_{L,n}\}, \{\bm{V}_{n,k,T}\}$
\end{algorithm}

\subsection{Convergence analysis of Algorithm \ref{alg:perTucker_BCD}}\label{subsec:convergence}
In this section, we provide the the convergence analysis of Algorithm \ref{alg:perTucker_BCD}. 
The special update rule in Proposition \ref{prop:Local_prox} brings challenges to the convergence analysis. In Algorithm \ref{alg:perTucker_BCD}, the update of local factors $\matV_{n,k}$'s is different from the standard Tucker decomposition update, as $\matV_{n,k}$ is required to be orthogonal to $\matU_{G,k}$. As a result, updating the local factors does not necessarily decrease the objective value in \eqref{eq:model}. Thus, Algorithm \ref{alg:perTucker_BCD} is not a strictly descent algorithm. Despite such subtleties, we can show that, when the proximal parameter $\rho$ is not too small, our algorithm can converge into stationary solutions. 

We will present our theorem on global convergence in the following theorem. Recall that we use $\matU_{G,k,t}$ to denote the $k$-th global factor $\matU_{G,k}$ after iteration $t$, and $\matV_{n,k,t}$ to denote the $k$-th local factor of source $n$ after iteration $t$.
\begin{theorem}
\label{thm:convergence}
If $|\setK|\ge 2$ and there exists a constant $B>0$ such that $\norm{\tsY_{n}}_F\le B$ for each $n$, when we choose $\rho=O(B^2)$, then Algorithm \ref{alg:perTucker_BCD} will converge to stationary points where
\begin{equation}
\begin{aligned}
&\min_{t=1,\cdots,T}\sum_{k=1}^{K}\norm{\matU_{G,k,t+1}\matU_{G,k,t+1}^\top-\matU_{G,k,t}\matU_{G,k,t}^\top}_F^2= O\left(\frac{1}{T}\right),
\end{aligned} 
\end{equation}
and
\begin{equation}
\begin{aligned}
&\min_{t=1,\cdots,T}\sum_{n=1}^N\sum_{k=1}^K\norm{\matV_{n,k,t+1}\matV_{n,k,t+1}^\top-\matV_{n,k,t}\matV_{n,k,t}^\top}_F^2= O\left(\frac{1}{\sqrt{T}}\right).
\end{aligned} 
\end{equation}
\end{theorem}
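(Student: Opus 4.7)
My plan is to establish convergence via a sufficient-decrease argument on the objective in \eqref{eq:model}, combined with a careful treatment of the non-monotonicity introduced by the shifting feasibility set for the local factors. Let $L_t$ denote the value of the objective \eqref{eq:model} at iteration $t$, evaluated at the optimized core tensors from Proposition \ref{prop:Core}. The strategy is to show that $L_t$ is approximately decreasing, with the defect controllable by the global-factor drift, and then to telescope.

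\textbf{Step 1 (Global update gives strict descent).} Using Proposition \ref{prop:Global_prox} together with Lemma \ref{lemma:min-to-max}, the $\matU_{G,k}$ subproblem reduces to maximizing a linear function of $\matU_{G,k}\matU_{G,k}^\top$ plus the proximal term $2\rho\,\mathrm{Tr}(\matU_{G,k}^\top\matU_{G,k,t}\matU_{G,k,t}^\top\matU_{G,k})$. By Lemma \ref{lemma:subspace-equiv}, the proximal term corresponds to $-\rho\,\varrho(\matU_{G,k},\matU_{G,k,t})$ up to a constant, which is strongly convex in the projection $\matU_{G,k}\matU_{G,k}^\top$. Since $\matU_{G,k,t+1}$ is the global maximizer, I will obtain
\[
L\big(\{\matU_{G,k,t+1}\},\{\matV_{n,k,t}\}\big) - L\big(\{\matU_{G,k,t}\},\{\matV_{n,k,t}\}\big) \le -\rho\sum_{k=1}^{K}\norm{\matU_{G,k,t+1}\matU_{G,k,t+1}^\top-\matU_{G,k,t}\matU_{G,k,t}^\top}_F^2.
\]

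\textbf{Step 2 (Local update has a controlled ascent).} This is the heart of the proof. For $k\in\setK$, the local subproblem \eqref{eq:local_opt_insetk} is constrained by $\matV_{n,k}\perp\matU_{G,k,t+1}$, so the old iterate $\matV_{n,k,t}$ is infeasible because it was only orthogonal to $\matU_{G,k,t}$. I cannot use $\matV_{n,k,t}$ as a comparison point directly. I will construct a feasible surrogate $\tilde\matV_{n,k,t}$ by orthonormalizing $(\matI-\matU_{G,k,t+1}\matU_{G,k,t+1}^\top)\matV_{n,k,t}$, and then show by standard matrix perturbation (polar-factor stability) that
\[
\norm{\tilde\matV_{n,k,t}\tilde\matV_{n,k,t}^\top-\matV_{n,k,t}\matV_{n,k,t}^\top}_F \le C_1\norm{\matU_{G,k,t+1}\matU_{G,k,t+1}^\top-\matU_{G,k,t}\matU_{G,k,t}^\top}_F.
\]
Using $\norm{\tsY_n}_F\le B$ and the representation of $L$ through residuals, this translates to $L(\matU_{t+1},\tilde\matV_{t})-L(\matU_{t+1},\matV_t) \le C_2 B^2 \sum_{k\in\setK}\norm{\matU_{G,k,t+1}\matU_{G,k,t+1}^\top-\matU_{G,k,t}\matU_{G,k,t}^\top}_F$. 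Since $\matV_{n,k,t+1}$ is optimal for the proximal subproblem against any feasible point, taking $\tilde\matV_{n,k,t}$ as the comparison yields
\[
L_{t+1} - L\big(\matU_{t+1},\matV_t\big) \le C_2 B^2 \sum_k\norm{\matU_{G,k,t+1}\matU_{G,k,t+1}^\top-\matU_{G,k,t}\matU_{G,k,t}^\top}_F - \rho\sum_{n,k}\norm{\matV_{n,k,t+1}\matV_{n,k,t+1}^\top-\matV_{n,k,t}\matV_{n,k,t}^\top}_F^2.
\]

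\textbf{Step 3 (Telescoping and Cauchy-Schwarz).} Writing $\Delta^U_t$ and $\Delta^V_t$ for the two squared-subspace-distance sums, Steps 1 and 2 combine into $L_{t+1}-L_t \le -\rho \Delta^U_t - \rho \Delta^V_t + C_2 B^2\sqrt{\Delta^U_t}$. Summing over $t=1,\ldots,T$ and using $0\le L_t\le L_1 = O(B^2)$, I obtain $\rho\sum_t(\Delta^U_t+\Delta^V_t)\le O(B^2)+C_2 B^2\sum_t\sqrt{\Delta^U_t}$. Choosing $\rho=O(B^2)$ large enough, Young's inequality $C_2 B^2\sqrt{\Delta^U_t}\le \tfrac{\rho}{2}\Delta^U_t+\tfrac{C_2^2 B^4}{2\rho}$ absorbs the cross term on the global part, giving $\sum_t\Delta^U_t=O(1)$, hence $\min_t\Delta^U_t = O(1/T)$. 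For the local part, Cauchy--Schwarz gives $\sum_t\sqrt{\Delta^U_t}\le\sqrt{T}\sqrt{\sum_t\Delta^U_t}=O(\sqrt{T})$, so $\rho\sum_t\Delta^V_t=O(B^2\sqrt{T})$, yielding $\min_t\Delta^V_t=O(1/\sqrt{T})$, as claimed.

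The main obstacle will be Step 2, specifically the perturbation bound on $\tilde\matV_{n,k,t}-\matV_{n,k,t}$ and its propagation into $L$ with the correct $B^2$ dependence; the hypothesis $|\setK|\ge 2$ presumably enters here to guarantee that the re-feasibilization procedure is well defined along every constrained mode visited within a sweep and that the cross-mode coupling in the Kronecker product in Proposition \ref{prop:Local_prox} remains controlled.
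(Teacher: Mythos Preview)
Your Steps 1 and 2 are on the right track and closely mirror what the paper does, including the construction of the feasible surrogate $\tilde\matV_{n,k,t}$ by orthonormalizing $(\matI-\matU_{G,k,t+1}\matU_{G,k,t+1}^\top)\matV_{n,k,t}$. The gap is in Step 3: the Young's inequality absorption does not work. From
\[
\rho\sum_{t}\Delta^U_t \;\le\; O(B^2)+C_2B^2\sum_t\sqrt{\Delta^U_t}
\]
applying $C_2B^2\sqrt{\Delta^U_t}\le \tfrac{\rho}{2}\Delta^U_t+\tfrac{C_2^2B^4}{2\rho}$ introduces an additive $\tfrac{C_2^2B^4}{2\rho}$ \emph{per iteration}, hence $\tfrac{C_2^2B^4T}{2\rho}$ after summing. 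With $\rho=O(B^2)$ this is $O(T)$, so you only get $\sum_t\Delta^U_t=O(T)$, i.e.\ $\min_t\Delta^U_t=O(1)$, not $O(1/T)$. The circularity is intrinsic to telescoping a single Lyapunov $L$ that mixes both updates: you need $\sum_t\Delta^U_t=O(1)$ to control the cross term, but the cross term is exactly what prevents you from proving $\sum_t\Delta^U_t=O(1)$ in the first place.

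The paper breaks this loop by exploiting $|\setK|\ge 2$ in a way different from what you guessed. When updating $\matU_{G,k}$, there is always \emph{another} mode $m\in\setK$, $m\neq k$, where orthogonality between global and local factors currently holds; consequently $\tsR_{G,n}\times_1\matU_{G,1}^\top\cdots\times_K\matU_{G,K}^\top=\tsY_n\times_1\matU_{G,1}^\top\cdots\times_K\matU_{G,K}^\top$, so the global subproblem is exactly a proximal step on the \emph{purely global} Lyapunov
\[
f_{G}(\matU_{G,1},\ldots,\matU_{G,K})=\sum_n\norm{\tsY_n-\tsY_n\times_1\matU_{G,1}\matU_{G,1}^\top\cdots\times_K\matU_{G,K}\matU_{G,K}^\top}_F^2,
\]
which does not see the local factors at all. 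Telescoping $f_G$ alone gives $\rho\sum_t\Delta^U_t\le f_G(\matU_1)$ with no cross term, hence $\min_t\Delta^U_t=O(1/T)$. The same mechanism (another surviving orthogonal mode) lets the local subproblem be rewritten as a proximal step on $f_{L,n}(\matV_{n,1},\ldots,\matV_{n,K})$; only then does your Step 2 perturbation argument apply, and Cauchy--Schwarz on $\sum_t\sqrt{\Delta^U_t}\le\sqrt{T}\sqrt{\sum_t\Delta^U_t}=O(\sqrt{T})$ gives the local rate. So $|\setK|\ge 2$ is not primarily about re-feasibilization being well defined; its essential role is to decouple the global and local descent analyses so that the global rate can be established first, free of contamination.
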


Theorem \ref{thm:convergence} provides many key insights. First, it shows that the subspaces spanned by the column vectors of global and local factors all converge into fixed solutions. The result establishes the global convergence of factors, as it does not require careful initialization. Second, the convergence rates for global and local factors differ. Global factors converge at a rate of $O(\frac{1}{T})$, which is standard in non-convex optimization. However, since some local factors must be perpendicular to the global factors, they converge at a slightly slower rate of $O(\frac{1}{\sqrt{T}})$. Third, our result is based on having $|\setK|\ge 2$. This requirement allows orthogonality to be maintained by each mode being updated.

To validate the convergence rates, we provide a proof-of-concept simulation study.  
Fig. \ref{fig:Loss_client} displays an example of the convergence of global and local factor matrices in this simulation. Both subspace errors of the local and global components go to 0.  
This slower rate of local components is primarily a result of the orthogonality requirement, and this result verifies Theorem \ref{thm:convergence}. The detail of this simulation study is relegated to Appendix \ref{app:ConvergenceSimulation}


\begin{figure}[H]
    \centering
    \includegraphics[width=0.7\linewidth]{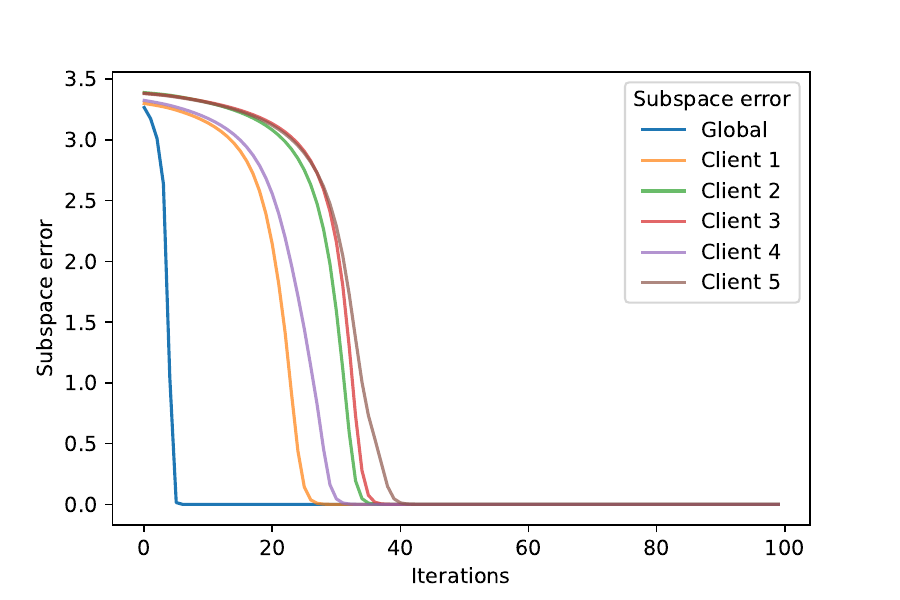}
    \caption{Subspace error for global component and different local sources}
    \label{fig:Loss_client}
\end{figure}


\subsection{Model initialization}\label{subsec:practical}



One simple approach is to initialize all components randomly. Alternatively, one may use a Tucker decomposition on all the data for initialization. To do so, let $s=\sum_{n=1}^N s_n$ be the total number of samples from all sources and recall that the data $\tsY_n$ from source $n$ has dimension $I_1\times\ldots\times I_K\times s_n$. Now the following steps can be taken: 
\begin{enumerate}
    \item Construct a tensor $\tsY$ that encompasses all samples from every source, with dimensions $I_1 \times \ldots \times I_K \times s$.
    \item Use Tucker decomposition $\tsY\approx\tsC\times_1\matU_{G,1}\ldots\times_K \matU_{G,K}$ to initialize global factors.
    \item Use Proposition \ref{prop:Core} to initialize the global core tensor for each source.
    \item For each source $n$, perform the Tucker decomposition on the local residual tensor $\tsR_{L,n}=\tsY_n-\tsC^{\star}_{G,n}\times_1\matU_{G,1}\ldots\times_K \matU_{G,K}$ to initialize the local core tensor and the local factor matrices.
\end{enumerate}
 This initialization does not require orthogonality between global and local components. Therefore, we may observe an increase in the reconstruction error in the first iteration. But we have found that in practice, this method yields faster convergence.

In addition, in our model, one needs to choose the modes in which orthogonality is imposed. At the same time, our theorem suggests that $|\setK|\ge 2$, and we found that the result is also true for $|\setK|= 1$ in various simulation studies. In practice, one can simply choose the mode with the largest dimension to impose the orthogonality constraint. Alternatively, cross-validation can be utilized to select the mode.

\subsection{Practical usage of \name \label{subsec:Usage}}
In this section, we introduce some practical applications of \name. Specifically, we shed light on its potential utility for improved classification, anomaly detection, and clustering. The key idea for all applications is to operate only on local components. This may allow for improved clustering, classification, and detection as differences become more explicit when shared knowledge is removed.

\subsubsection{Classification via \name}\label{subsubsec:Application_Classification}



To use \name for classification, we assume that each source corresponds to a class. Then we perform \name and can get the estimated local factor matrices $\hat{\matV}_{n,k},\, k=1,\ldots,K$ for each class. When a new piece of data $\tsY^{\text{new}}$ is sent, we can use the following decision rule to classify the new data.

    \begin{equation}
    \label{eqn:classifystatistics}
        \hat n = \arg\max_n \|\tsC^{\star}_{L,n}\|_F^2= \arg\max_n \|\tsY^{\text{new}}\times_1 \hat{\bm{V}}_{n,1}^\top\ldots \times_K\hat{\bm{V}}_{n,K}^\top\|_F^2.
    \end{equation}

The decision rule \eqref{eqn:classifystatistics} demonstrates that we can efficiently classify the data by selecting the class that maximizes the Frobenius norm of the local core tensor. This is because the largest norm of the core components indicates that the local subspace is most suitable for representing the original data, since the local core tensor is a projection of the original tensor onto the corresponding local subspace. 
We have found that such a decision rule is equivalent to finding the smallest possible reconstruction error across all classes. The discussion is relegated to Appendix \ref{app:ClassificationDiscussion}.

We want to emphasize that such a classification approach differs from traditional tensor-based classifiers \citep{klus2019tensor}, which directly trained supervised learning models for tensor classification. Here, we focus on a generative approach, which first trains $C$ data generation models (i.e., local subspaces) and then utilizes the representation error to decide to which class the data belong. The algorithm will construct global and local subspaces, which is beneficial not only for classification purposes but also for feature interpretation and visualization.



\subsubsection{Anomaly detection via \name}

By monitoring only local components, \name can improve anomaly detection methods as the changes in the underlying data become more explicit when common factors are removed. Specifically, we propose using $\| \tsC_L \|_F^2$ as the key monitoring statistic for online anomaly detection.


Here we emphasize that \name does not implement a sparsity penalty as often used in tensor-based anomaly detection \citep{yan2018real}. This is a unique benefit of \name as we do not assume that the anomaly patterns are sparse, which is too restrictive in some applications. As a result, \name can accommodate a wide range of anomalous pattern distributions. 


 \subsubsection{Clustering via \name}
 \name provides an alternative approach for client clustering based on local factors. Specifically, we focus on the setting of subspace clustering, which aims to cluster the clients if they are within the same local subspaces.
The subspace distance between client $n_1$ and $n_2$, $\rho_{n_1,n_2} = \|\hat{\bm{V}}_{n_1}\hat{\bm{V}}_{n_1}^\top - \hat{\bm{V}}_{n_2}\hat{\bm{V}}_{n_2}^\top\|_F^2$, can be calculated, where $\bm{V}_n$ is defined by the Kronecker product of the local factor matrices for client $n$. Then we can use spectral clustering to make clusters of the clients and further use multidimensional scaling to make the clustering plot \citep{hastie2009elements}.




\section{Numerical Studies}\label{sec:Simulation}
Now that we have introduced \name and its potential application, we validate its claimed advantages through numerical simulations. Sec. \ref{subsec:DataGeneration} introduces the data generation procedure. Sec. \ref{subsec:Performance}, Sec. \ref{subsec:Classification}, and Sec. \ref{subsec:Clustering} evaluates the performance of \name in terms of data reconstruction, classification, and clustering.

\subsection{Data generation}\label{subsec:DataGeneration}

In this simulation work, each sample of the data is a grayscale image with dimensions 50 by 50. The construction of each sample is low-rank global component, heterogeneous local component, and i.i.d standard normal noise, as in Eq. \eqref{eq:DataStructure}
\begin{equation}\label{eq:DataStructure}
\bm{\mathcal{Y}}_n = \bm{\mathcal{Y}}_{G,n} + \bm{\mathcal{Y}}_{L,n} + 
\bm{\mathcal E}_n.
\end{equation} 

We generate $N=3$ clients defined as $3$ patterns for the heterogeneous local component: Swiss pattern, oval pattern, and rectangle pattern, as in Fig. \ref{fig:PatternExample}. The value of all the patterns is 5 while the rest part is 0. In each pattern, we generate $10$ sample images. There is some variability within each pattern, as shown in the left part of Fig. \ref{fig:PatternExample}. The Swiss can be thin or thick; the oval can be vertical, horizontal, or circular; the rectangle can be wide, tall, or square.


For the global component, we randomly create orthonormal matrices $\matU_{G,1}$ and $\matU_{G,2}$ with dimension $50\times5$ for the 3 clients. Then we randomly generate the global core tensor $\bm{\mathcal C}_{G,n}$ with dimension $5\times5\times10$ for each client. And each entry of the global core tensors follows i.i.d. $N(0,100)$. 
The global components are constructed by $\bm{\mathcal{Y}}_{G,n} =\bm{\mathcal C}_{G,n}\times_1 \matU_{G,1}\times_2 \matU_{G,2}$, $n=1,2,3$.




Therefore, the full data dimension is $3\times50\times50\times10$.
Some examples of the data generation structure are shown in the right part of Fig. \ref{fig:PatternExample}. The three rows are for three patterns. The two columns show the global and local components, respectively, and the third column shows the sum of the global and local components along with the error term. With noise and global background, the local pattern can barely be recognized, which makes accurate identification of the local patterns challenging.


\begin{figure}
    \centering
    \begin{minipage}{0.45\linewidth}
        \centering
        \includegraphics[width=\linewidth]{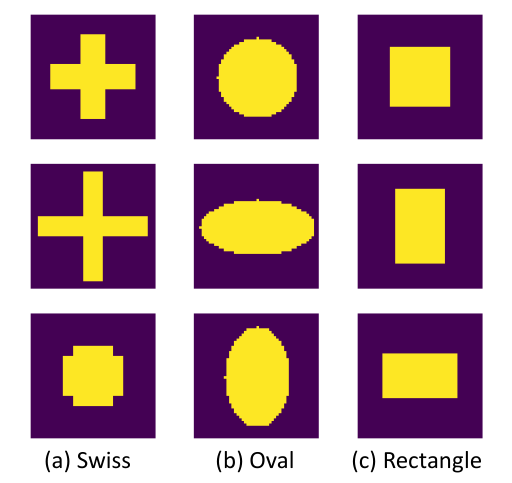}
    \end{minipage}
    \begin{minipage}{0.45\linewidth}
        \centering
        \includegraphics[width=\linewidth]{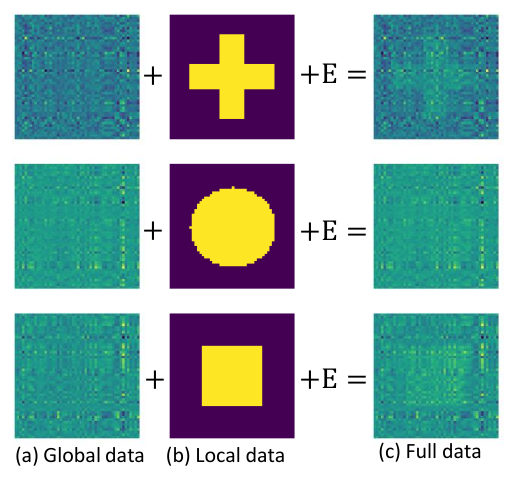}
    \end{minipage}
    \caption{\footnotesize\textbf{Left}: examples of variability in each pattern. 
    \textbf{Right}: examples of different components in each pattern.  }
    \label{fig:PatternExample}
\end{figure}

\subsection{Performance}\label{subsec:Performance}

In the generated data, we apply \name to decouple the global and local components. For comparison, we also evaluate the performance of some benchmark algorithms.


\begin{enumerate}
    \item \texttt{globalTucker}: We first concatenate the samples of all clients into tensor $\tsY$ and then apply the Tucker decomposition on $\tsY$. 
    \item \texttt{localTucker}: We apply a standard Tucker decomposition on each client $\tsY_n$ individually. 
    \item \texttt{robustTucker}: We first concatenate samples from all clients in $\tsY$, then apply the method in \citet{lu2019tensor} to identify the low-rank and sparse components.
    \item \texttt{perPCA}: We apply \texttt{perPCA} \citep{shi2022personalized} on the vectorized dataset where we vectorize $50 \times 50$ images into vectors of length $2500$ and use \texttt{perPCA} to find global and local components. Although \texttt{perPCA} is designed for vector datasets, this comparison can highlight the need for personalized Tensor decompositions when data is in tensor form.
\end{enumerate}




\begin{figure}[t]
    \centering
    \includegraphics[width=0.9\linewidth]{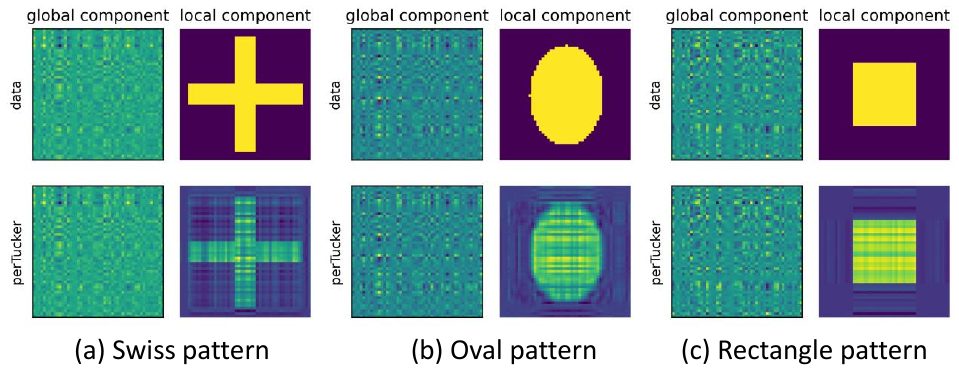}
    \caption{Reconstruction result example for three patterns}
    \label{fig:SimulationResult}
\end{figure}
 Fig. \ref{fig:SimulationResult} depicts examples of global and local component reconstruction via \name. The first row represents the data, and the second row shows the reconstruction from \name. The columns represent global and local components. The three examples with different patterns indicate that \name can effectively reconstruct the shared and unique data patterns.
 
 Furthermore, to numerically examine the performance and benchmark algorithms, we calculate a few performance metrics. $\hat{\matU}_{G,k}$ and $\hat{\matV}_{n,k}$ represent the estimated global and local factor matrices; $\bm{\mathcal{\hat{Y}}}_{G,n}$ and $\bm{\mathcal{\hat{Y}}}_{L,n}$ represent the estimated reconstruction of global and local components.
\begin{enumerate}
    \item \textbf{Global subspace error}: We compute the regularized global subspace error between the ground truth global factors $\{\matU_{G,k}\}$ and the estimated global factors $\{\hat{\matU}_{G,k}\}$ by  $\varrho (\bigotimes_{k=1}^K\matU_{G,k}, \bigotimes_{k=1}^K\bm{\hat{U}}_{G,k})/\|\bigotimes_{k=1}^K\matU_{G,k}\|_F^2 $.
    \item \textbf{Local subspace error}: We first generate 100 images for each pattern, and use Tucker decomposition to estimate the local factors $\{\matV_{n,k}\}$ for each pattern. Then the regularized local subspace error is calculated by $\varrho (\bigotimes_{k=1}^K\matV_{n,k}, \bigotimes_{k=1}^K\hat{\matV}_{n,k})/\|\bigotimes_{k=1}^K\matV_{n,k}\|_F^2 $, and take the average of $3$ patterns.
    \item \textbf{Global component error}: defined by $\small\sum_n\|\bm{\mathcal{\hat{Y}}}_{G,n} - \bm{\mathcal{Y}}_{G,n} \|_F^2 / \sum_n \|\bm{\mathcal{Y}}_{G,n}\|_F^2$. 
    \item \textbf{Local component error}: defined by $\small\sum_n\|\bm{\mathcal{\hat{Y}}}_{L,n} - \bm{\mathcal{Y}}_{L,n} \|_F^2 / \sum_n \|\bm{\mathcal{Y}}_{L,n}\|_F^2$.
    \item \textbf{Denoised error}: defined by  $\small\sum_n \| (\bm{\mathcal{Y}}_{G,n} + \bm{\mathcal{Y}}_{L,n}) - (\bm{\mathcal{\hat{Y}}}_{G,n} + \bm{\mathcal{\hat{Y}}}_{L,n})\|_F^2) / \sum_n \|\bm{\mathcal{Y}}_{G,n} + \bm{\mathcal{Y}}_{L,n}\|_F^2 $ 
\end{enumerate}
We run each experiment 10 times from $10$ different random seeds and report their mean and standard deviation. The results of the measuring statistics for different methods are summarized in Table \ref{tab:Error_table}. From Table \ref{tab:Error_table}, we can conclude the following statements. 

\begin{table}[t]
    \caption{Component-wise reconstruction error with standard deviation in parenthesis}
    \centering
    \scriptsize
    \begin{tabular}
    {cccccc}
    \hline
         & \name & \texttt{perPCA} & \texttt{globalTucker} & \texttt{localTucker} & \texttt{robustTucker} \\\hline
       Global subspace error ($10^{-3}$) & $\textbf{2.3}( 0.8)$ & $536(6)$ & $\textbf{2.4} (0.8)$ & N/A & $3.7 (0.9)$\\
       Local subspace error $(10^{-1})$& $\textbf{6.4}(0.4)$ & $9.88(0.07)$ & N/A & $9.34(0.03)$ & N/A \\
       Global component error $(10^{-3})$ & $\textbf{5.7}( 1.6)$ & $372(16)$& $\textbf{5.7}(1.6)$ & N/A & $264 (12)$\\
       Local component error $(10^{-1})$ & $\textbf{2.8}( 0.6)$ & $23.1( 1.3)$ & N/A & $63(3)$ & $10(0.09)$\\
       Denoised error ($10^{-2}$) & $4(0.7)$ & $13.7(0.7)$ & $14.1(0.7)$ & $\textbf{2} (2)$ & $13.7(0.7)$\\
        \hline
    \end{tabular}
    \label{tab:Error_table}
\end{table}

1) \name yields the best results in separating the global and local components due to the utilization of the low-rank tensor structure of both components, from the following observations: (a) Comparing the global component error of \name and \texttt{perPCA}, we can conclude that \name identifies the global component with better accuracy due to its use of low-rank tensor structures. (b) Comparing the local component error of \name and \texttt{robustTucker}, although \texttt{robustTucker} identifies the global subspace with decent accuracy, it yields a much larger error in terms of local component reconstruction due to the fact that the local component does not assume any low-rank structure.

2) The reconstruction error for local components is larger for all methods compared to the global components. Two factors contribute to this: (a) the reconstruction of the local component is generally harder compared to the global components since it is only shared within the same clients, resulting in lower accuracy and larger variance with the use of fewer datasets; (b) the local components are generated by the shape variations of Swiss, oval, and rectangle patterns, which are not exactly low-rank, thus, the true local rank subspace is also an approximation from the dataset.




\subsection{Classification}\label{subsec:Classification}
In this section, we evaluate the classification performance of the \name algorithm.
The training sample size ranges from 10 to 50 with a step of 10 for each pattern. Then we perform the \name decomposition and get the local factor matrices. Next, we create 50 new images for each pattern and use the method described in Sec. \ref{subsubsec:Application_Classification} to classify the 150 images. This procedure is repeated 100 times. Table \ref{tab:Classification} displays the mean and standard deviation of the classification accuracy for various parameters.	
From Table \ref{tab:Classification} we can see that \name exhibits excellent classification performance even when the sample size is small. Increasing the training sample size leads to improved prediction accuracy. In comparison, if we perform local Tucker on the three clients and classify the new figures, the accuracy will be around 33\% regardless of the training sample size. This accuracy is close to ``guessing" the class. This is because local Tucker will model the commonality and peculiarity simultaneously. Therefore, global features are also considered when making the decision, while the multiplication coefficients are randomly generated.
\begin{table}[t]
    \centering
    \caption{Classification accuracy and the standard deviation for different parameters}
    \begin{tabular}{cccccc}
    \hline
        Training sample size & 10 & 20 & 30 & 40 & 50\\\hline
    Accuracy & 86.7\% & 91.2\% & 92.7 \% & 93.9\% & 95.0\%\\
    SD of accuracy & 6 \% & 4\% & 4\% & 4\% & 3\%\\
    \hline
       \end{tabular}
    
    \label{tab:Classification}
\end{table}

To visualize the classification process, we use the boxplot to show the summary of the test statistics in \eqref{eqn:classifystatistics} in Fig. \ref{fig:class_box}. The sample size is set to 50. The test statistics are centralized by the median statistics within each pattern. The test statistics for each specific true pattern are significantly higher than those for the other two patterns in the corresponding cases, implying that the classification procedure is effective and robust.


\begin{figure}[t]
    \centering\includegraphics[width=0.7\linewidth]{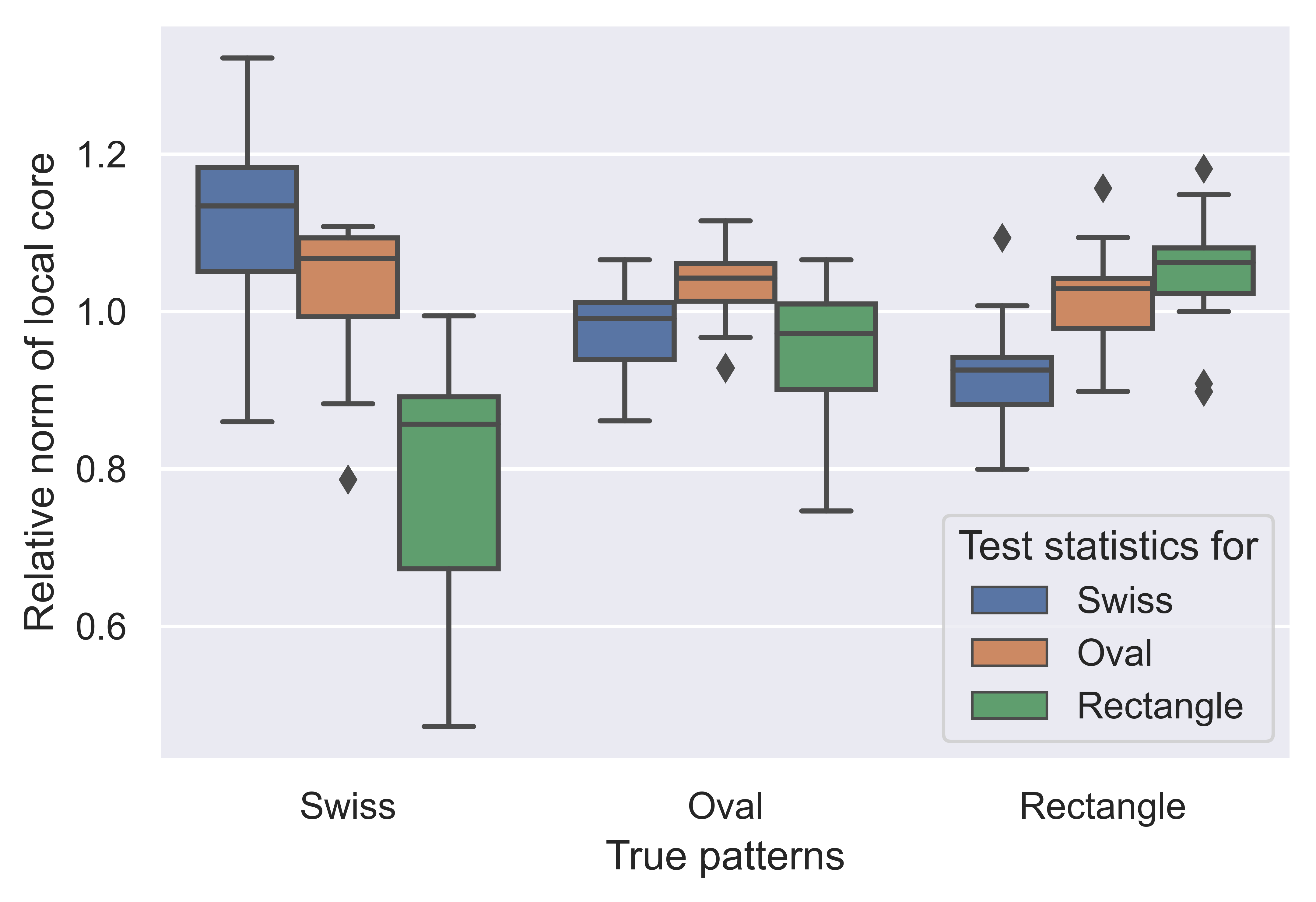}
    \caption{Box plot for test statistics defined in \eqref{eqn:classifystatistics} for different patterns and different classes}
    \label{fig:class_box}
\end{figure}

\subsection{Clustering}\label{subsec:Clustering}
 In this section, we study the clustering performance of \name under a different problem setting. When generating the data, the variability within each pattern is measured by a ``ratio" variable. The performance of clustering is evaluated by the ability to cluster similar ``ratio" within each pattern. We focus on the Swiss pattern and cut the range of ``ratio" from $0.7$ to $1.4$ into the group of $7$ ratio intervals, each with a width of $0.1$. Then for each ratio interval, we generate $3$ clients, each with a sample size of $100$ figures. We make clustering plots for the $21$ clients and see the aggregation of the clients with similar ratio, as shown in Fig. \ref{fig:clustering_result}. 



\begin{figure}[H]
    \centering
        \includegraphics[width=0.7\linewidth]{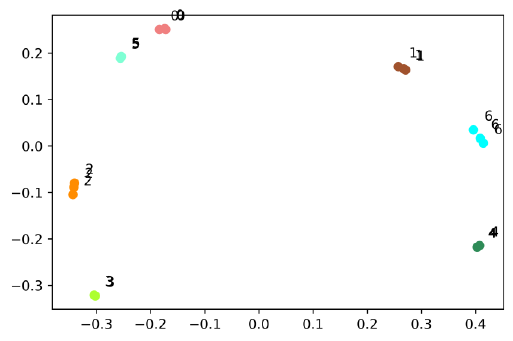}
    \caption{Clustering plot for swiss patterns}
    \label{fig:clustering_result}
\end{figure}


\section{Case Study} \label{sec:CaseStudy}
In this section, we use two case study experiments to demonstrate the application of the \name algorithm. Sec. \ref{subsec:Solar} provides an anomaly detection example that illustrates the power of the \name algorithm in detecting solar flares. 
The subsequent section, Sec. \ref{subsec:tonnage}, demonstrates the effectiveness of the \name technique in classifying tonnage fault signals.	

\subsection{Anomaly detection in solar flare data}\label{subsec:Solar}
The first example involves monitoring solar activities and detecting solar flares from a stream of solar images. A solar flare is a significant event in the sun that emits enormous amounts of energetic charged particles that could cause power grids to fail \citep{marusek2007solar}. Therefore, detecting solar flares promptly is critical to implementing preemptive and corrective measures. However, monitoring solar activities is challenging due to the high dimensionality of the solar thermal imaging data and the gradual changes in solar temperature over time. Existing detection methods, which rely on subtracting the functional mean (background) using the sample mean, are insufficient to detect small transient flares in the dynamic system \citep{zhao2022adaptive}. Other studies focus on the decomposition of solar images into their background and anomaly components \citep{yan2018real}.

This dataset, publicly available in \citep{zhao2022adaptive}, comprises a sequence of images of size 232 × 292 pixels captured by a satellite. We use a sample of 300 frames in this case study. To detect solar flares in real time, we begin by subtracting the mean from each sample and then preprocess the data using the method proposed by \citep{aharon2006k}. Following this, we use a sliding window of $8\times8$ to divide each frame into small patches,  resulting in a total of 1044 patches. The four right-most columns of pixels are discarded. Each patch or tile is then vectorized, yielding a data dimension of $300\times1044\times64$.
After the preprocessing step, we apply the \name algorithm to the data to break them down into two components: global components representing the slowly changing background and local components indicating the detected solar flares.

\begin{figure}[t]
    \centering
    \includegraphics[width=0.9\linewidth]{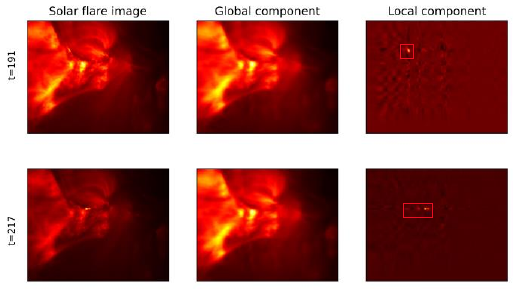}
    \caption{Detection of solar flare at $t=191$ and $t=217$}
    \label{fig:solar_detect}
\end{figure}
Fig. \ref{fig:solar_detect} shows two frames where there is an abrupt change in the almost-stationary background. On the original images, such changes are not visible in the complicated background. However, after using \name to extract global and local components, one can clearly see the location and movement of small and rapid changes in local components. The experiments highlight \name's ability to change the signal magnification.

Moreover, since the local component represents the anomaly, we can use the Frobenius norm of the local core tensor $\norm{\tsC_{L,n}}_F$ as monitoring statistics to detect the anomaly. The results are shown in Fig. \ref{fig:solar_core}, where we plot the logarithm of the Frobenius norm of the local core tensor $\norm{\tsC_{L,n}}_F$ for $300$ frames.
\begin{figure}[t]
    \centering
    \includegraphics[width=0.7\linewidth]{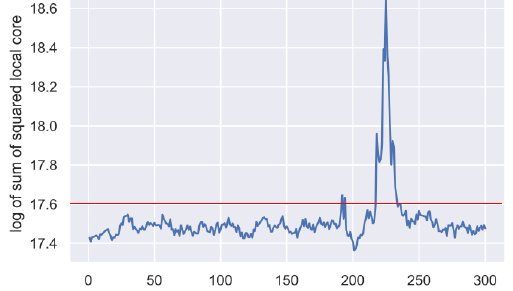}
    \caption{The logarithm of Frobenius norm of local core tensors for $300$ frames}
    \label{fig:solar_core}
\end{figure}
For the most time, the curve in Fig. \ref{fig:solar_core} remains at a low level, suggesting that no significant changes in solar activities are detected. However, there are 2 clear peaks above the red control line, suggesting that there are two solar flares. Therefore, Fig. \ref{fig:solar_core} provides an intuitive approach for detecting anomalous behaviors.

\subsection{Classification for tonnage data} \label{subsec:tonnage}
\name also applies to monitoring tonnage profiles in a multi-operation forging process that utilizes multiple strain gauge sensors. Specifically, the forging process employs four columns, each equipped with a strain gauge sensor that measures the tonnage force exerted by the press uprights, as illustrated in Fig. \ref{fig:tonnagesetup}. Consequently, each operational cycle generates a four-channel tonnage profile. The dataset for this case study consists of 305 in-control profiles, collected under normal production conditions, and 69 out-of-control profiles for each of the four different fault classes. The length of each channel profile is 1201, resulting in a data dimension of $\tsY^{4 \times 1201  \times305}$ for the ``normal'' class and $\tsY^{4 \times 1201 \times69}$ for each fault class. Fig. \ref{fig:tonnage} presents examples of profiles for both normal and the four different fault conditions.

\begin{minipage}{0.48\linewidth}
    \begin{center}
        \includegraphics[width=\linewidth]{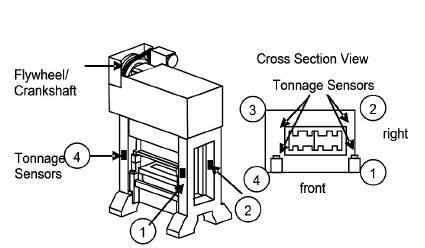}
        \captionof{figure}{Tonnage Signal Monitoring}
        \label{fig:tonnagesetup} 
    \end{center}
\end{minipage}
\begin{minipage}{0.48\linewidth}
    \begin{center}
    \includegraphics[width=0.9\linewidth]{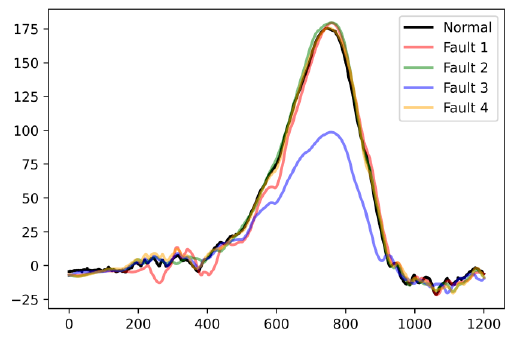}
    \captionof{figure}{Tonnage Data}
    \label{fig:tonnage}
    \end{center}
\end{minipage}


\begin{figure}[h]
    \centering
    \includegraphics[width=0.9\linewidth]{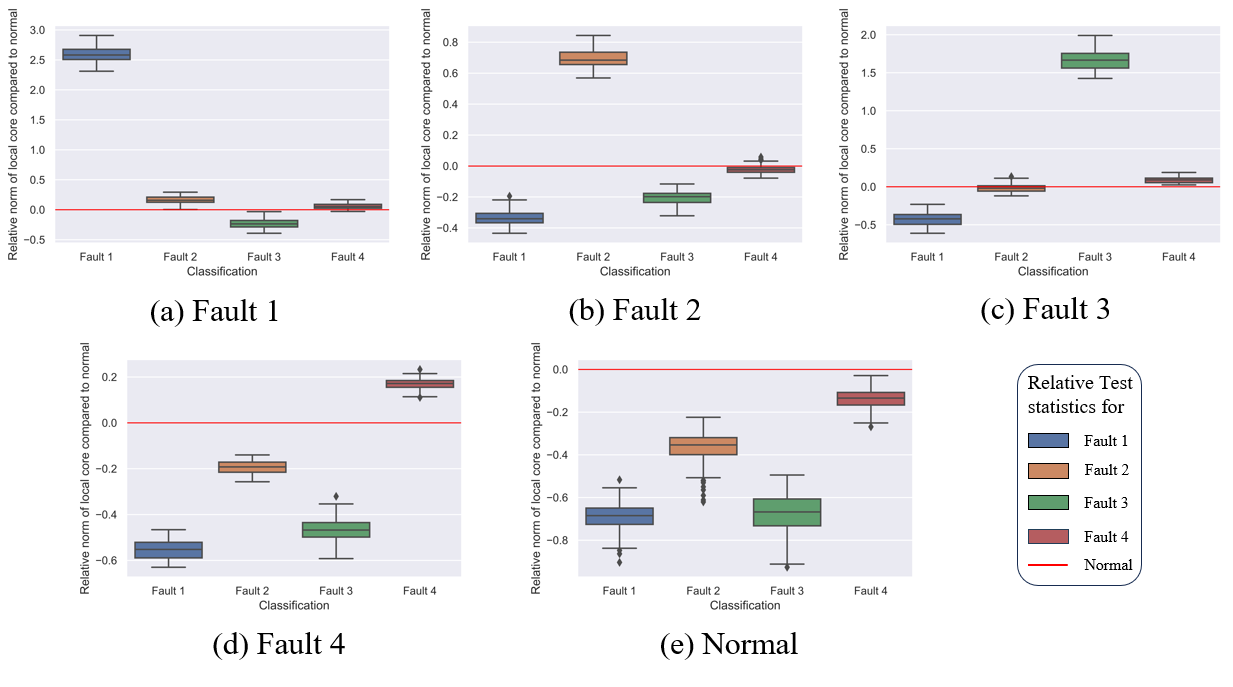}
    \caption{Relative decision test statistics compared to normal when the true label is (a) Fault 1, (b) Fault 2, (c) Fault 3, (d) Fault 4, (e) Normal}
    \label{fig:tonnage_box}
\end{figure}



In this case study, we define the five ``clients" as normal and four fault conditions. We select the first 50 normal samples and the first 10 samples from each fault condition to form the training dataset. The test dataset comprises $255$ normal samples and $59$ samples for each fault condition. We further assume that only the global and local factor matrices along the signal length dimension are orthogonal to each other.

Applying the classification method in Sec. \ref{subsubsec:Application_Classification}, we are able to achieve 100\% accuracy for classifying all fault modes and normal cases. Fig. \ref{fig:tonnage_box} shows the relative test statistics by computing the difference between the test statistics of the corresponding fault modes and the normal cases. Consequently, normal test statistics have been normalized as $y=0$ for each sample, as shown in a red line in Fig. \ref{fig:tonnage_box}. 

We can see that in every fault mode, the range of the test statistics for the corresponding fault mode is much higher than those of the other three faults and the normal baseline. When the true label is normal, the relative test statistics for all fault modes are less than the normal baseline. Thus, \name provides informative statistics for anomaly detection and fault diagnostics. 


\section{Conclusion} \label{sec:Conclusion}
In this paper, we propose the \name method to decompose the tensor data into global components with shared factor matrices and local components that are orthogonal to global components. Global and local components model the commonality and peculiarity of the data. Subsequently, we present an efficient algorithm to solve the model, based on the block coordinate descent algorithm. The inclusion of proximal terms in the step of updating factor matrices allows the convergence to the stationary point. Then, several applications of \name, such as anomaly detection, classification, and clustering, are demonstrated by simulation and case studies. 

Some future studies will include some special properties of the global and local components. For example, the local components can be sparse to model the anomaly components in different applications.

\newpage

\bibliographystyle{apalike} 
\bibliography{reference} 

\appendix      

\section{Proof of Proposition \ref{prop:orthogonal}}
In this section, we prove the proposition of the equivalent condition for two tensors to be orthogonal to each other. 
We first introduce Lemma \ref{lemma:kron-norm}.

\begin{lemma}\label{lemma:kron-norm} For two matrices $\matA$ and $\matB$,
    $$
    \|\bm{A}\bigotimes\bm{B}\|_F^2=\|\bm{A}\|_F^2\|\bm{B}\|_F^2
    $$
\end{lemma}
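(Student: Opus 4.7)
\textbf{Proof plan for Lemma \ref{lemma:kron-norm}.} The plan is to expand both sides directly in terms of the matrix entries and observe that the double sum factors. Suppose $\matA \in \mathbb{R}^{m\times n}$ and $\matB \in \mathbb{R}^{p\times q}$. By the definition of the Kronecker product, $\matA\bigotimes\matB$ is an $mp \times nq$ matrix whose entries are exactly the products $A_{ij}B_{kl}$, indexed by the pairs $(i,k)$ on rows and $(j,l)$ on columns. So writing out the squared Frobenius norm as the sum of squares of all entries gives
\begin{equation*}
\|\matA\bigotimes\matB\|_F^2 = \sum_{i=1}^{m}\sum_{j=1}^{n}\sum_{k=1}^{p}\sum_{l=1}^{q} (A_{ij}B_{kl})^2.
\end{equation*}

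Next I would factor the quadruple sum as a product of two double sums. Since $(A_{ij}B_{kl})^2 = A_{ij}^2 B_{kl}^2$ and the indices $(i,j)$ are independent of $(k,l)$, the sum decouples into
\begin{equation*}
\Bigl(\sum_{i=1}^{m}\sum_{j=1}^{n} A_{ij}^2\Bigr)\Bigl(\sum_{k=1}^{p}\sum_{l=1}^{q} B_{kl}^2\Bigr) = \|\matA\|_F^2\,\|\matB\|_F^2,
\end{equation*}
which is exactly the claim. There is essentially no obstacle here: the lemma is a direct consequence of the block structure of $\matA\bigotimes\matB$, and the only thing to be careful about is bookkeeping the four indices correctly. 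The statement can also be seen as a special case of the identity $\mathrm{tr}((\matA\otimes\matB)^\top(\matA\otimes\matB)) = \mathrm{tr}(\matA^\top\matA)\,\mathrm{tr}(\matB^\top\matB)$ obtained from $(\matA\otimes\matB)^\top(\matA\otimes\matB)=(\matA^\top\matA)\otimes(\matB^\top\matB)$ together with multiplicativity of trace under Kronecker products, but the direct entrywise argument above is cleaner and self-contained.
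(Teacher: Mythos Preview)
Your proof is correct. Your primary argument is the direct entrywise expansion, while the paper instead uses the trace route you mention only as an aside: it writes $\|\matA\otimes\matB\|_F^2=\mathrm{Tr}[(\matA\otimes\matB)^\top(\matA\otimes\matB)]=\mathrm{Tr}[\matA^\top\matA\otimes\matB^\top\matB]=\mathrm{Tr}(\matA^\top\matA)\,\mathrm{Tr}(\matB^\top\matB)$. The trace version is slightly slicker in that it packages the index bookkeeping into two standard Kronecker identities (mixed-product and multiplicativity of trace), whereas your entrywise computation is fully self-contained and does not rely on remembering those facts. Both are one-line arguments; there is no substantive difference in depth.
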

\begin{proof}
    \begin{align*}
        \|\bm{A}\bigotimes\bm{B}\|_F^2&= Tr\left[ (\bm{A}\bigotimes\bm{B})^\top(\bm{A}\bigotimes\bm{B})\right]\\
        &=Tr[\bm{A}^\top\bm{A}\bigotimes\bm{B}^\top\bm{B}]\\
        &=Tr(\bm{A}^\top\bm{A})Tr(\bm{B}^\top\bm{B})\\
        &=\|\bm{A}\|_F^2\|\bm{B}\|_F^2
    \end{align*}
\end{proof}
Then we prove Proposition \ref{prop:orthogonal}
\begin{proof}

\begin{align*}\left\langle \bm{\mathcal{Y}}_{G,n},\bm{\mathcal{Y}}_{L,n}\right\rangle  & =\left\langle \bm{\mathcal{C}}_{G,n}\times_{1}\matU_{G,1}\ldots\times_{K}\matU_{G,K},\bm{\mathcal{C}}_{L,n}\times_{1}\bm{V}_{n,1}\ldots\times_{K}\bm{V}_{n,K}\right\rangle \\
 & =\left\langle \left(\matU_{G,K}\bigotimes\cdots\bigotimes\matU_{G,1}\right)\bm{c}_{G,n},\left(\bm{V}_{n,K}\bigotimes\cdots\bigotimes\bm{V}_{n,1}\right)\bm{c}_{L,n}\right\rangle \\
 & =Tr(\bm{c}_{G,n}^{T}\left(\matU_{G,K}^{T}\bigotimes\cdots\bigotimes\matU_{G,1}^{T}\right)\left(\bm{V}_{n,K}\bigotimes\cdots\bigotimes\bm{V}_{n,1}\right)\bm{c}_{L,n})\\
 & =Tr(\bm{c}_{G,n}^{T}\left(\matU_{G,K}^{T}\bm{V}_{n,K}\bigotimes\cdots\bigotimes\matU_{G,1}^{T}\bm{V}_{n,1}\right)\bm{c}_{L,n}),
\end{align*}
where $\bm{c}_{G,n}$ and $\bm{c}_{L,n}$ are vectorized global and local tensors.
    
    On the one hand, if $\matU_{G,k}^\top\bm{V}_{n,k}=0$ for some $k$, $\matU_{G,K}^\top\bm{V}_{n,K}\bigotimes\cdots\bigotimes \matU_{G,1}^\top\bm{V}_{n,1}=\bm{0}$ and thus $\left \langle \bm{\mathcal Y}_{G,n}, \bm{\mathcal Y}_{L,n} \right \rangle=0$ for any $\bm{\mathcal C}_{G,n}$, $\bm{\mathcal C}_{L,n}$.
    On the other hand, if we want $\left \langle \bm{\mathcal Y}_{G,n}, \bm{\mathcal Y}_{L,n} \right \rangle=0$ for any $\bm{\mathcal C}_{G,n}$, $\bm{\mathcal C}_{L,n}$, we need 
    \begin{align*}
    &\matU_{G,K}^\top\bm{V}_{n,K}\bigotimes\cdots\bigotimes \matU_{G,1}^\top\bm{V}_{n,1}=\bm{0}\\
    \Leftrightarrow & \|\matU_{G,K}^\top\bm{V}_{n,K}\bigotimes\cdots\bigotimes \matU_{G,1}^\top\bm{V}_{n,1}\|_F^2=0\\
    \Leftrightarrow & \|\matU_{G,K}^\top\bm{V}_{n,K}\|_F^2 \cdots \|\matU_{G,1}^\top\bm{V}_{n,1}\|_F^2=0\\
    \end{align*}
    Therefore, there exists some mode $k$, $\|\matU_{G,k}^\top\bm{V}_{n,k}\|_F^2$ is $0$, which means that   $\matU_{G,k}^\top\bm{V}_{n,k}=0$.
\end{proof}

\section{Proof of Proposition \ref{prop:Core}}

\begin{proof}
    For each $n$, the optimization problem to derive global and local core tensors is
    \begin{equation}
        \min_{\bm{\mathcal C}_{G,n}, \bm{\mathcal C}_{L,n}} \|\bm{\mathcal Y}_n - \bm{\mathcal C}_{G,n}\times_1 \matU_{G,1}\ldots\times_K \matU_{G,K}- \bm{\mathcal C}_{L,n}\times_1 \bm{V}_{n,1} \ldots\times_K \bm{V}_{n,K}\|_F^2
    \end{equation}
    By vectorizing the terms, we get 
    \begin{align*}
         &\min_{\bm{c}_{G,n}, \bm{c}_{L,n}} \|\bm{y}_n-(\bigotimes_{k=K}^1 \matU_{G,k})\bm{c}_{G,n}-(\bigotimes_{k=K}^1 \bm{V}_{n,k})\bm{c}_{L,n}\|^2 \\
         \iff & \min_{\bm{c}_{G,n}, \bm{c}_{L,n}} \norm{\bm{y}_n-
         \begin{pmatrix}
         \bigotimes_{k=K}^1 \matU_{G,k} & \bigotimes_{k=K}^1 \bm{V}_{n,k}
         \end{pmatrix}
         \begin{pmatrix}
             \bm{c}_{G,n}\\
             \bm{c}_{L,n}
         \end{pmatrix}
         }^2
    \end{align*}
    where the sign $\bigotimes_{k=K}^1$ represents the Kronecker product of the matrices in the order $K,K-1,\ldots,1$. $\bm{c}_{G,n}$ and $\bm{c}_{L,n}$ represents the vectorized global and core tensor from source $n$.
    This is a least square loss in linear regression. The solution for $\begin{pmatrix}
             \bm{c}_{G,n}\\
             \bm{c}_{L,n}
         \end{pmatrix}$ is 
         $$\scriptsize
         \begin{pmatrix}
             \bm{c}^{\star}_{G,n}\\
             \bm{c}^{\star}_{L,n}
         \end{pmatrix} = \left[ \begin{pmatrix}
         \bigotimes_{k=K}^1 \matU_{G,k} & \bigotimes_{k=K}^1 \bm{V}_{n,k}
         \end{pmatrix}^\top \begin{pmatrix}
         \bigotimes_{k=K}^1 \matU_{G,k} & \bigotimes_{k=K}^1 \bm{V}_{n,k}
         \end{pmatrix}\right]^{-1} \begin{pmatrix}
         \bigotimes_{k=K}^1 \matU_{G,k} & \bigotimes_{k=K}^1 \bm{V}_{n,k}
         \end{pmatrix}^\top \bm{y}_n
         $$
         Since the global and local factor matrices are orthogonal, $(\bigotimes_{k=K}^1 \matU_{G,k})^\top \bigotimes_{k=K}^1 \matU_{G,k}=I$, $( \bigotimes_{k=K}^1 \bm{V}_{n,k})^\top \bigotimes_{k=K}^1 \bm{V}_{n,k}=I$. The upper off-diagonal matrix in the product is 
         $$
         (\bigotimes_{k=K}^1 \bm{V}_{n,k})^\top \bigotimes_{k=K}^1 \matU_{G,k} = \bigotimes_{k=K}^1 \bm{V}_{n,k}^\top\matU_{G,k}=0
         $$
         where the last equality holds because in at least one dimension, $\bm{V}_{n,k}^\top\matU_{G,k}=0$, similarly, the lower off-diagonal matrix is also $0$, therefore, the product is the identity matrix.
         $$
         \begin{pmatrix}
             \bm{c}_{G,n}^{\star}\\
             \bm{c}_{L,n}^{\star}
         \end{pmatrix}=\begin{pmatrix}
             (\bigotimes_{k=K}^1 \matU_{G,k}^\top) \bm{y}_n\\
             (\bigotimes_{k=K}^1 \bm{V}_{n,k}^\top) \bm{y}_n
         \end{pmatrix}
         $$
         Transform the solution above back to the tensor, and we get the solution in the proposition.
\end{proof}

\section{Proof of Lemma \ref{lemma:subspace-equiv}}
\begin{proof}
    \begin{align*}
        \|\bm{UU}^\top-\matU_t\matU_t^\top\|_F^2 &= Tr \left[(\bm{UU}^\top-\matU_t\matU_t^\top) (\bm{UU}^\top-\matU_t\matU_t^\top)^\top \right]\\
        &= Tr\left[\bm{UU}^\top\bm{UU}^\top -2\bm{UU}^\top\matU_t\matU_t^\top +\matU_t\matU_t^\top\matU_t\matU_t^\top\right]\\
        &=2c-2Tr\left[\matU^\top\matU_t\matU_t^\top \matU\right]
    \end{align*}
    where $c$ is the number of rows of the factor matrix $\matU$.
\end{proof}

\section{Proof of Lemma \ref{lemma:min-to-max}}
\begin{proof}
    The proof of Equation \eqref{eq:global-equiv} and Equation \eqref{eq:local-equiv} are similar. Here we take one component in Equation \eqref{eq:global-equiv} as an example. We use $\tsC^{\star}_{G,n}$ and $\tsC^{\star}_{L,n}$ to represent the optimized core tensors.
    \begin{align*}
        & \|\tsR_{G,n} - \tsC^{\star}_{G,n} \times_1 \matU_{G,1}\ldots\times_K \matU_{G,K} \|_F^2 \\
        =& \|\tsR_{G,n}\|_F^2 -2\left\langle \tsR_{G,n},\tsC^{\star}_{G,n} \times_1 \matU_{G,1}\ldots\times_K \matU_{G,K} \right\rangle +\|\tsC^{\star}_{G,n} \times_1 \matU_{G,1}\ldots\times_K \matU_{G,K}\|_F^2\\
        =&\|\tsR_{G,n}\|_F^2 -2\left\langle\tsR_{G,n}\times_1 \matU_{G,1}^\top \ldots \times_K \matU_{G,K}^\top, \tsC^{\star}_{G,n} \right\rangle +\|\tsC^{\star}_{G,n}\|_F^2
    \end{align*}
        The second term can be formulated as
        \begin{small}
        \begin{align*}
            &-2\left\langle\tsR_{G,n}\times_1 \matU_{G,1}^\top\ldots\times_K \matU_{G,K}^\top, \tsC^{\star}_{G,n}\right\rangle\\
            =& -2\left\langle\tsR_{G,n}\times_1 \matU_{G,1}^\top\ldots \times_K \matU_{G,K}^\top, \tsR_{G,n}\times_1 \matU_{G,1}^\top \ldots \times_K \matU_{G,K}^\top\right\rangle\\
            =&-2\|\tsR_{G,n}\times_1 \matU_{G,1}^\top \ldots \times_K \matU_{G,K}^\top\|_F^2
        \end{align*}
        \end{small}
        And the third term can be formulated as
        \begin{align*}
            \|\tsC^{\star}_{G,n}\|_F^2 &=\|(\tsR_{G,n}+\tsC_{L,n}\times_1\matV_{n,1}\ldots\times_K\matV_{n,K}) \times_1 \matU_{G,1}^\top\ldots\times_K \matU_{G,K}^\top\|_F^2\\
            & = \|\tsR_{G,n}\times_1 \matU_{G,1}^\top\ldots \times_K \matU_{G,K}^\top\|_F^2
        \end{align*}
        Therefore, for each $n=1,\ldots,N$,
        \begin{align*}
            \|\tsR_{G,n} - \tsC^{\star}_{G,n} \times_1 \matU_{G,1}\ldots\times_K \matU_{G,K}\|_F^2 = \|\tsR_{G,n}\|_F^2 - \|\tsR_{G,n}\times_1 \matU_{G,1}^\top\ldots \times_K \matU_{G,K}^\top\|_F^2 
        \end{align*}
        and Equation \eqref{eq:global-equiv} holds. Similarly, for each $n=1,\ldots,N$,
        \begin{align*}
            \|\tsR_{L,n} - \tsC^{\star}_{L,n} \times_1 \bm{V}_{n,1}\ldots\times_K \bm{V}_{n,K}\|_F^2 = \|\tsR_{L,n}\|_F^2 - \|\tsR_{L,n}\times_1 \bm{V}_{n,1}^\top\ldots \times_K \bm{V}_{n,K}^\top\|_F^2 
        \end{align*}
        and Equation \eqref{eq:local-equiv} holds.
\end{proof}

\section{Proof of Proposition \ref{prop:Global_prox}}\label{App:proof_global_prox}
\begin{proof}
    First, by Lemma \ref{lemma:subspace-equiv}, we have
    \begin{align*}
        \|\matU_{G,k}\matU_{G,k}^\top-\matU_{G,k,t}{\matU_{G,k,t}^\top}\|_F^2 =2c-2Tr\left[\matU_{G,k}^\top\matU_{G,k,t}{\matU_{G,k,t}^\top} \matU_{G,k}\right]
    \end{align*}
    where $c$ is the number of rows of the factor matrix, therefore, can be omitted in the optimization.
    Then by Lemma \ref{lemma:min-to-max}, we can rewrite the minimization problem into a maximization problem.
    \begin{align*}
        &\min_{\matU_{G,k}} \sum_{n=1}^N \|\tsR_{G,n} - \tsC^{\star}_{G,n} \times_1 \matU_{G,1}\ldots\times_K \matU_{G,K} \|_F^2 +\rho \|\matU_{G,k}\matU_{G,k}^\top-\matU_{G,k,t}{\matU_{G,k,t}^\top}\|_F^2\\
        \iff & \max_{\matU_{G,k}} \sum_{n=1}^N \|\tsR_{G,n}\times_1 \matU_{G,1}^\top\ldots\times_K \matU_{G,K}^\top \|_F^2-\rho \|\matU_{G,k}\matU_{G,k}^\top-\matU_{G,k,t}{\matU_{G,k,t}^\top}\|_F^2\\
        \iff & \max_{\matU_{G,k}} \sum_{n=1}^N \|\matU_{G,k} (\tsR_{{G,n}})_{(k)}(\bigotimes_{q\neq k} \matU_{G,q}^\top)^\top\|_F^2+2\rho Tr\left[\matU_{G,k}^\top\matU_{G,k,t}{\matU_{G,k,t}^\top} \matU_{G,k}\right]\\
        \iff & \max_{\matU_{G,k}} \sum_{n=1}^N Tr\left[ \matU_{G,k}^\top \bm{W}_{G,n} \bm{W}_{G,n}^\top \matU_{G,k} \right] + 2\rho Tr\left[\matU_{G,k}^\top\matU_{G,k,t}{\matU_{G,k,t}^\top} \matU_{G,k}\right]\\
        \iff & \max_{\matU_{G,k}} Tr\left[ \matU_{G,k}^\top\left(\sum_{n=1}^N \bm{W}_{G,n} \bm{W}_{G,n}^\top + 2\rho \matU_{G,k,t}{\matU_{G,k,t}^\top} \right) \matU_{G,k}\right]
    \end{align*}

    Therefore, the columns of $\matU_{G,k}$ are the eigenvectors of $\sum_{n=1}^N \bm{W}_{G,n} \bm{W}_{G,n}^\top + 2\rho \matU_{G,k,t}{\matU_{G,k,t}^\top}$ corresponding to top $g_k$ eigenvalues.
\end{proof}

\section{Proof of Proposition \ref{prop:Local_prox}}\label{App:proof_local_prox}
We first introduce a Lemma
\begin{lemma}\label{lemma:orthgonal-opt}
    Suppose $\bm{V}\in\mathbb R ^{a\times b}$. $\matU\in\mathbb R ^{a\times c}$ is an orthonormal matrix, $\bm{S}\in\mathbb R^{a\times a}$ is a symmetric matrix. The solution to the optimization problem
    $$
    \max_{\bm V} Tr(\bm{V}^\top\bm{SV})~s.t.~\bm{V}^\top\bm{V}=\bm{I},~\bm{V}^\top\matU=0
    $$
    is the eigenvectors of the matrix $(\bm{I}-\bm{UU}^\top)\bm{S}(\bm{I}-\bm{UU}^\top)$ corresponding to the top $b$ eigenvalues.
\end{lemma}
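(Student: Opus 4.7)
The plan is to eliminate the linear constraint $\bm{V}^\top\matU=0$ by absorbing it into the quadratic form as an orthogonal projection, and then invoke the classical Ky Fan / Rayleigh--Ritz trace maximization principle on the Stiefel manifold $\{\bm V : \bm V^\top\bm V=\bm I\}$. The closed-form candidate in the statement is then read off directly, and one only needs to check that it is in fact feasible.

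First, I would rewrite the orthogonality constraint as a projection identity. Let $\bm P = \bm I - \matU\matU^\top$ be the orthogonal projector onto the orthogonal complement of $\mathrm{range}(\matU)$. Then the condition $\bm V^\top\matU=0$ is equivalent to $\bm P\bm V = \bm V$. Substituting this identity on both sides of the quadratic form gives
\[
Tr(\bm V^\top\bm S\bm V) \;=\; Tr(\bm V^\top\bm P\bm S\bm P\bm V),
\]
which has the same value at every feasible $\bm V$, but now the Gram matrix $\bm M := \bm P\bm S\bm P$ is symmetric with range contained in $\mathrm{range}(\bm P)$.

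Second, I would drop the constraint $\bm V^\top\matU=0$ and solve the relaxed problem. By Ky Fan's maximum principle, for any symmetric $\bm M \in \mathbb R^{a\times a}$,
\[
\max_{\bm V^\top\bm V=\bm I}\,Tr(\bm V^\top\bm M\bm V) \;=\; \sum_{i=1}^{b}\lambda_i(\bm M),
\]
and the maximum is attained when the columns of $\bm V$ are unit eigenvectors of $\bm M$ associated with its top $b$ eigenvalues. Applied to $\bm M=\bm P\bm S\bm P$, this provides an upper bound for the original constrained problem and produces the candidate $\bm V$ described in the lemma.

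Third, I would verify feasibility so that this upper bound is actually attained. Since $\bm P\matU=0$, one has $\bm M\matU = 0$, so $\mathrm{range}(\matU)\subseteq\ker(\bm M)$. Consequently every eigenvector of $\bm M$ corresponding to a nonzero eigenvalue automatically lies in $\mathrm{range}(\bm P)$ and is orthogonal to $\matU$. For a zero eigenvalue, the corresponding eigenspace decomposes as an orthogonal direct sum of a part inside $\mathrm{range}(\matU)$ and a part inside $\mathrm{range}(\bm P)$, and one simply selects basis vectors from the latter. This shows a top-$b$ eigenvector configuration can be chosen with $\bm V^\top\matU=0$, so the Ky Fan upper bound is met by a feasible $\bm V$, completing the proof.

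The main subtlety I expect is exactly the degenerate case where some of the top $b$ eigenvalues of $\bm P\bm S\bm P$ equal zero: a careless choice of eigenvectors could place columns of $\bm V$ inside $\mathrm{range}(\matU)$ and violate the constraint. The remedy is the consistent basis choice described above, and it relies crucially on the identity $\mathrm{range}(\matU)\subseteq \ker(\bm P\bm S\bm P)$, which follows immediately from $\bm P\matU=0$ together with symmetry of $\bm S$.
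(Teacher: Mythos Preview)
Your argument is correct and follows essentially the same route as the paper: rewrite the objective using the projector $\bm P=\bm I-\matU\matU^\top$, relax the constraint $\bm V^\top\matU=0$, solve the unconstrained trace maximization via the top eigenvectors of $\bm P\bm S\bm P$, and then check that this relaxed optimizer is in fact feasible. If anything, you are slightly more careful than the paper's proof, which concludes $\bm x^\top\matU=0$ from $\lambda\bm x^\top\matU=0$ without singling out the degenerate case $\lambda=0$; your explicit treatment of the zero-eigenvalue eigenspace closes that gap.
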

\begin{proof}
    Since all the feasible solutions satisfy $\bm{V}^\top\matU=0$, we have
    \begin{align*}
        &\max_{\bm V} Tr(\bm{V}^\top\bm{SV})\\
        \iff & \max_{\bm V} Tr[\bm{V}^\top\bm{SV}-\bm{V}^\top\bm{UU}^\top\bm{SV}-\bm{V}^\top\bm{SUU}^\top\bm{V}+\bm{V}^\top\bm{UU}^\top\bm{S}\bm{UU}^\top\bm{V}]\\
        \iff &  \max_{\bm V} Tr[\bm{V}^\top(\bm{I}-\bm{UU}^\top)\bm{S}(\bm{I}-\bm{UU}^\top)\bm{V}]
    \end{align*}
    We claim that the optimal solution of 
    $$
    \max_{\bm V} Tr[\bm{V}^\top(\bm{I}-\bm{UU}^\top)\bm{S}(\bm{I}-\bm{UU}^\top)\bm{V}]~s.t.~\bm{V}^\top\bm{V}=\bm{I},~\bm{V}^\top\matU=0
    $$
    is the same by relaxing the condition $\bm{V}^\top\matU$. When this condition is relaxed, the optimal solution consists of the eigenvectors of the matrix $(\bm{I}-\bm{UU}^\top)\bm{S}(\bm{I}-\bm{UU}^\top)$. For any eigenvector $\bm{x}$, we have
    $$
    \lambda \bm{x}^\top\matU=\bm{x}^\top(\bm{I}-\bm{UU}^\top)\bm{S}(\bm{I}-\bm{UU}^\top)\matU=0
    $$
    Therefore, the optimal solution of the relaxed problem also satisfies $\bm{V}^\top\matU=0$. This completes the proof.
\end{proof}

Then we provide the proof of Proposition \ref{prop:Local_prox}.
\begin{proof}
    For each $n=1,\ldots,N$ and $k=1,\ldots,K$, the optimization problem to update the local factor matrix $\bm{V}_{n,k}$ is 
    \begin{equation}
        \bm{V}_{n,k}=\arg\min_{\bm{V}_{n,k}} \|\tsR_{L,n} - \tsC^{\star}_{L,n} \times_1 \bm{V}_{n,1}\ldots\times_K \bm{V}_{n,K} \|_F^2 +\rho \|\bm{V}_{n,k}\bm{V}_{n,k}^\top-\bm{V}_{n,k,t}{\bm{V}_{n,k,t}^\top}\|_F^2
    \end{equation}
    Following a similar process in updating the global factor matrices, the optimization problem above can be expressed as
    \begin{align*}
        &\max_{\bm{V}_{n,k}} \|\bm{V}_{n,k} (\tsR_{L,n})_{(k)}(\bigotimes_{q\neq k} \bm{V}_{n,q}^\top)^\top\|_F^2+2\rho Tr\left[\bm{V}_{n,k}^\top \bm{V}_{n,k,t} {\bm{V}_{n,k,t}^\top} \bm{V}_{n,k}\right]\\
        \iff & \max_{\bm{V}_{n,k}} Tr\left[ \bm{V}_{n,k}^\top\left(\bm{W}_{L,n} \bm{W}_{L,n}^\top + 2\rho \bm{V}_{n,k,t}{\bm{V}_{n,k,t}^\top} \right) \bm{V}_{n,k}\right]
    \end{align*}
    Now denote $\bm{S}=\bm{W}_{L,n} \bm{W}_{L,n}^\top + 2\rho \bm{V}_{n,k,t}{\bm{V}_{n,k,t}^\top}$. If $k\not\in \mathcal K$, there is no further constraint on $\bm{V}_{n,k}$. Therefore, the columns of $\bm{V}_{n,k}$ are the eigenvectors of $\bm{S}$ corresponding to top $l_{n,k}$ eigenvalues.\\
    If $k\in \mathcal K$, the problem becomes
    $$
    \max_{\bm{V}_{n,k}} Tr\left( \bm{V}_{n,k}^{\top} \bm{S} \bm{V}_{n,k}\right)~s.t.~ \bm{V}_{n,k}^{\top} \bm{V}_{n,k}=I, \bm{V}_{n,k}^{\top} \matU_{G,k}=0
    $$
    Therefore, from Lemma \ref{lemma:orthgonal-opt}, if we let $\bm{S}'=(\bm{I}-\matU_{G,k}\matU_{G,k}^\top)\bm{S}(\bm{I}-\matU_{G,k}\matU_{G,k}^\top)$, the columns of $\bm{V}_{n,k}$ are the eigenvectors of $\bm{S}'$ corresponding to top $l_{n,k}$ eigenvalues.    
\end{proof}

\section{Proof of Theorem \ref{thm:convergence}}
In this section, we will prove Theorem \ref{thm:convergence}.  First, we will show that the global factors $\matU_{G,k}$'s converge into stationary points. Then we will show that local factors $\matV_{n,k}$ also converge at stationary points. Some auxiliary lemmas are relegated to Sec. \ref{sec:auxlemma} for clarity.

For notation simplicity, we define
\begin{equation}
\label{eqn:fgidef}
f_{G,n}\left(\matU_{G,1},\matU_{G,2},\cdots,\matU_{G,K}\right)=\norm{\tsY_n-\tsY_n\times_1\matU_{G,1}\matU_{G,1}^\top\times_2\cdots\times_K\matU_{G,K}\matU_{G,K}^\top}_F^2
\end{equation}
and
\begin{equation}
\label{eqn:flidef}
f_{L,n}\left(\matV_{n,1},\matV_{n,2},\cdots,\matV_{n,K}\right)=\norm{\tsY_n-\tsY_n\times_1\matV_{n,1}\matV_{n,1}^\top\times_2\cdots\times_K\matV_{n,K}\matV_{n,K}^\top}_F^2
\end{equation}

Apparently, both $f_{G,n}$ and $f_{L,n}$ are lower bounded by $0$.

The following lemma shows that the global factors $\matU_{G,k}$'s will converge into stationary points.

\begin{lemma}
\label{lm:pugconverge}
The column spaces spanned by $\matU_{G,k}$'s converge into stationary points.
\begin{equation}
\label{eqn:pugconverge}
\sum_{t=1}^T\sum_{k=1}^K\norm{\matU_{G,k,t+1}\matU_{G,k,t+1}^\top-\matU_{G,k,t}\matU_{G,k,t}^\top}_F^2\le \frac{1}{\rho}\sum_{n=1}^Nf_{G,n}\left(\matU_{G,1,1},\matU_{G,2,1},\cdots,\matU_{G,K,1}\right)
\end{equation}
\end{lemma}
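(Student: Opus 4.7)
The plan is to establish a per-step descent inequality for the global-only reconstruction objective $\sum_n f_{G,n}$ and then telescope. The subtlety is that Algorithm \ref{alg:perTucker_BCD} does not optimize $\sum_n f_{G,n}$ directly: it optimizes a proximal surrogate involving $\tsR_{G,n}$, which carries information about the local factors. The key observation that makes the proof go through is that, thanks to the mode orthogonality $\matU_{G,k'}^\top\matV_{n,k'}=0$ enforced in $\mathcal K$, that surrogate coincides with a proximal step on $\sum_n f_{G,n}$ up to a constant, and moreover $\sum_n f_{G,n}$ is invariant under local factor updates.

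First I would recast the update of $\matU_{G,k,t+1}$ in \eqref{eq:global-opt} using Lemma \ref{lemma:min-to-max}, obtaining the equivalent objective
$$\sum_{n=1}^N \norm{\tsR_{G,n}}_F^2 - \sum_{n=1}^N \norm{\tsR_{G,n}\times_1 \matU_{G,1}^\top\cdots\times_K \matU_{G,K}^\top}_F^2 + \rho\norm{\matU_{G,k}\matU_{G,k}^\top-\matU_{G,k,t}\matU_{G,k,t}^\top}_F^2.$$
Expanding $\tsR_{G,n}=\tsY_n - \tsC_{L,n}^{\star}\times_1 \matV_{n,1}\cdots\times_K \matV_{n,K}$ and distributing the mode products through the local reconstruction, every term in the expansion picks up a factor $\matU_{G,k'}^\top \matV_{n,k'}$ for every $k'$. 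Under the hypothesis $|\mathcal K|\ge 2$, for the mode $k$ being updated there exists $k'\in\mathcal K\setminus\{k\}$, and by the orthogonality enforced at the end of the previous update of mode $k'$ we have $\matU_{G,k',\cdot}^\top\matV_{n,k',\cdot}=0$ at the current iterate; hence the cross term vanishes and $\tsR_{G,n}\times_1\matU_{G,1}^\top\cdots\times_K\matU_{G,K}^\top = \tsY_n\times_1\matU_{G,1}^\top\cdots\times_K\matU_{G,K}^\top$. The Pythagorean identity for the orthogonal projector $\bigotimes_k \matU_{G,k}\matU_{G,k}^\top$ then identifies the sum of the first two terms (up to an additive constant in $\matU_{G,k}$) with $\sum_n f_{G,n}\big(\matU_{G,1,t+1},\ldots,\matU_{G,k-1,t+1},\matU_{G,k},\matU_{G,k+1,t},\ldots,\matU_{G,K,t}\big)$.

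Second, using optimality of $\matU_{G,k,t+1}$ against the feasible point $\matU_{G,k}=\matU_{G,k,t}$ (at which the proximal term vanishes), this reduction yields
$$\sum_{n=1}^N f_{G,n}^{\text{after}}(t,k) + \rho\norm{\matU_{G,k,t+1}\matU_{G,k,t+1}^\top-\matU_{G,k,t}\matU_{G,k,t}^\top}_F^2 \le \sum_{n=1}^N f_{G,n}^{\text{before}}(t,k).$$
Because $f_{G,n}$ depends only on the global factors, the local factor updates inserted between consecutive global updates keep it unchanged, so $\sum_n f_{G,n}^{\text{before}}(t,k+1)=\sum_n f_{G,n}^{\text{after}}(t,k)$ and similarly across outer iterations. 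Telescoping over $k=1,\ldots,K$ and $t=1,\ldots,T$ collapses the right side to $\sum_n f_{G,n}(\matU_{G,\cdot,1}) - \sum_n f_{G,n}(\matU_{G,\cdot,T+1})$, and dropping the nonnegative final term and dividing by $\rho$ yields \eqref{eqn:pugconverge}.

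The main obstacle is the first reduction: although one is tempted to invoke the orthogonality at mode $k$, the candidate $\matU_{G,k}$ inside the minimization is \emph{not} yet constrained to be orthogonal to $\matV_{n,k,t}$, so one must locate a different mode $k'\in\mathcal K\setminus\{k\}$ where orthogonality was already established. This is exactly where the assumption $|\mathcal K|\ge 2$ in Theorem \ref{thm:convergence} is used; once past this step, the argument is a standard proximal block-coordinate descent telescoping.
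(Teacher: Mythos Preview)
Your proposal is correct and follows essentially the same approach as the paper: use Lemma~\ref{lemma:min-to-max} to rewrite the global update, exploit $|\mathcal K|\ge 2$ to find a mode $k'\neq k$ where $\matU_{G,k'}^\top\matV_{n,k'}=0$ so that $\tsR_{G,n}$ can be replaced by $\tsY_n$ inside the projection, reapply Lemma~\ref{lemma:min-to-max} to recognize $\sum_n f_{G,n}$, compare against the feasible point $\matU_{G,k,t}$, and telescope. Your explicit remark that $f_{G,n}$ is invariant under the interleaved local updates is a point the paper leaves implicit but is indeed what makes the telescoping clean.
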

Notice that by dividing $TK$ from both sides of \eqref{eqn:pugconverge}, we can immediately prove the first inequality in theorem \ref{thm:convergence}.

\begin{proof}
From proposition \ref{prop:Global_prox}, the update of global factor $j$ is given by
\begin{equation*}
\matU_{G,k,t+1}=\arg\min_{\matU_{G,k}} \sum_{n=1}^N \|\tsR_{G,n,t} - \tsC^{\star}_{G,n} \times_1 \matU_{G,1,t+1}\ldots\times_K \matU_{G,K,t}\|_F^2 +\rho \|\matU_{G,k}\matU_{G,k}^\top-\matU_{G,k,t}{\matU_{G,k,t}^\top}\|_F^2
\end{equation*}
where $\tsC^{\star}_{G,n}=\tsY_n\times_1\matU_{G,1,t+1}\times_2\cdots\times_K\matU_{G,K,t}$. 

By lemma \ref{lemma:min-to-max}, we know 
\begin{equation*}
\sum_{n=1}^N \|\tsR_{G,n} - \tsC^{\star}_{G,n} \times_1 \matU_{G,1}\ldots\times_K \matU_{G,K} \|_F^2=- \|\tsR_{G,n}\times_1 \matU_{G,1}^\top\ldots\times_K \matU_{G,K}^\top \|_F^2+const
\end{equation*}

From the definition of $\tsR_{G,n,t}$, we know that $\tsR_{G,n,t}=\tsY_n-\tsC_{L,n,t}\times_1\matV_{n,1,t}\times_2\cdots\times_K \matV_{n,K,t+1}$. Since $|\setK|\ge 2$ and $\matU_{G,k,t}^\top\matV_{n,k,t}$ for every $k\in \setK$ and every $t$, there exists at least one $m\in \setK$ and $m\neq k$ such that $\matU_{G,m,t}^\top\matV_{n,m,t}=0$ and $\matU_{G,m,t+1}^\top\matV_{n,m,t+1}=0$. Therefore, $\tsR_{G,n}\times_1\matU_{G,1,t+1}^\top\times_2\cdots\times_K \matU_{n,K,t}^\top=\tsY_n\times_1\matU_{G,1,t+1}^\top\times_2\cdots\times_K \matU_{n,K,t}^\top$, 

Then we can apply lemma \ref{lemma:min-to-max} again, and rewrite the maximization problem as the minimization problem,
$$\footnotesize
\begin{aligned}
&\matU_{G,k,t+1}\\
&=\arg\min_{\matU_{G,k}} \sum_{n=1}^N \|\bm{\mathcal Y}_n - \tsY_n \times_1 \matU_{G,1,t+1}\matU_{G,1,t+1}^\top\ldots\times_K \matU_{G,K,t}\matU_{G,K,t}^\top\|_F^2 +\rho \|\matU_{G,k}\matU_{G,k}^\top-\matU_{G,k,t}{\matU_{G,k,t}^\top}\|_F^2\\
&=\arg\min_{\matU_{G,k}}\sum_{n=1}^Nf_{G,n}(\matU_{G,1,t+1},\cdots,\matU_{G,k-1,t+1},\matU_{G,k},\matU_{G,k+1,t},\cdots,\matU_{G,K,t})+\rho \|\matU_{G,k}\matU_{G,k}^\top-\matU_{G,k,t}^\top{\matU_{G,k,t}^\top}\|_F^2
\end{aligned}
$$

From the definition of optimal solution, we know,
\begin{equation}
\label{eqn:dpuinoneiterate}
\begin{aligned}
&\sum_{n=1}^Nf_{G,n}(\matU_{G,1,t+1},\cdots,\matU_{G,k-1,t+1},\matU_{G,k,t+1},\matU_{G,k+1,t},\cdots,\matU_{G,K,t}) \\
&+\rho \|\matU_{G,k,t+1}\matU_{G,k,t+1}^\top-\matU_{G,k,t}{\matU_{G,k,t}^\top}\|_F^2\\
&\le \sum_{n=1}^Nf_{G,n}(\matU_{G,1,t+1},\cdots,\matU_{G,k-1,t+1},\matU_{G,k,t},\matU_{G,k+1,t},\cdots,\matU_{G,K,t})
\end{aligned}    
\end{equation}

Summing both sides for $k$ from $1$ to $K$, then for $t$ from $1$ to $T$, 
we have,
\begin{equation}
\label{eqn:sumofdpuinoneepoch}
\begin{aligned}
&\rho\sum_{k=1}^K\sum_{t=1}^T\|\matU_{G,k,t+1}\matU_{G,k,t+1}^\top-\matU_{G,k,t}{\matU_{G,k,t}^\top}\|_F^2 \\
&\le \sum_{n=1}^Nf_{G,n}(\matU_{G,1,T+1},\cdots,\matU_{G,K,T+1}) -\sum_{n=1}^N f_{G,n}(\matU_{G,1,1},\cdots,\matU_{G,K,1})
\end{aligned}
\end{equation}

Since $f_{G,n}\ge 0$, this completes our proof.
\end{proof}

Now we will analyze the convergence of $\matV_{n,k}$'s.  The following lemma gives an upper bound on the subspace distance of $\matU_{G,k}$'s from one update.
\begin{lemma}
\label{lm:dpubound}
If there exists a constant $B>0$ such that $\norm{\tsY_n}_F\le B$ for all $n$, then we have,
\begin{equation}
\norm{\matU_{G,k,t+1}\matU_{G,k,t+1}^\top-\matU_{G,k,t}{\matU_{G,k,t}^\top}}_F\le \frac{2B\sqrt{N}}{\sqrt{\rho} }
\end{equation}
for all $t=1,2,\cdots,T$ and all $n=1,\cdots,N$.
\end{lemma}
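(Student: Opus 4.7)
The plan is to exploit the one-step descent inequality \eqref{eqn:dpuinoneiterate} that was already established in the proof of Lemma \ref{lm:pugconverge}, and then control the right-hand side uniformly using the assumption $\norm{\tsY_n}_F \le B$. More precisely, setting $F_G(\matU_{G,1},\dots,\matU_{G,K}) := \sum_{n=1}^N f_{G,n}(\matU_{G,1},\dots,\matU_{G,K})$ with $f_{G,n}$ as in \eqref{eqn:fgidef}, inequality \eqref{eqn:dpuinoneiterate} gives, for any $k$ and any $t$,
\begin{equation*}
\rho\,\bigl\|\matU_{G,k,t+1}\matU_{G,k,t+1}^\top-\matU_{G,k,t}\matU_{G,k,t}^\top\bigr\|_F^2 \le F_G\bigl(\matU_{G,1,t+1},\dots,\matU_{G,k-1,t+1},\matU_{G,k,t},\matU_{G,k+1,t},\dots,\matU_{G,K,t}\bigr).
\end{equation*}
Thus the whole task reduces to producing a uniform upper bound on $F_G$ evaluated at any tuple of orthonormal factor matrices.

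Next, I would uniformly bound each $f_{G,n}$. For any orthonormal factor matrices, each $\matU_{G,k}\matU_{G,k}^\top$ is an orthogonal projector, so multiplying a tensor by it in the mode-$k$ unfolding sense is nonexpansive in Frobenius norm. Hence
\begin{equation*}
\bigl\|\tsY_n \times_1 \matU_{G,1}\matU_{G,1}^\top \times_2 \cdots \times_K \matU_{G,K}\matU_{G,K}^\top\bigr\|_F \le \norm{\tsY_n}_F \le B,
\end{equation*}
and the triangle inequality together with the definition \eqref{eqn:fgidef} yields $f_{G,n}\le (2B)^2 = 4B^2$. Summing over $n=1,\dots,N$ gives $F_G\le 4NB^2$ uniformly in the factor matrices.

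Combining the two pieces above,
\begin{equation*}
\rho\,\bigl\|\matU_{G,k,t+1}\matU_{G,k,t+1}^\top-\matU_{G,k,t}\matU_{G,k,t}^\top\bigr\|_F^2 \le 4NB^2,
\end{equation*}
and taking square roots produces exactly the claimed bound $\tfrac{2B\sqrt{N}}{\sqrt{\rho}}$. I do not anticipate a serious obstacle here: the only subtlety is to make sure the nonexpansiveness argument is applied to mode-$k$ unfoldings (so that multiplying by a projector really is a left-multiplication of the $I_k \times (\prod_{q\neq k}I_q)$ unfolding by $\matU_{G,k}\matU_{G,k}^\top$, whose spectral norm is $1$), which is routine. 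The result then feeds into the subsequent proof that the local factors $\matV_{n,k}$ also converge, since we will need to know that the \emph{perturbation} of the global factors between iterations is small in order to show that the $\matV_{n,k}$ updates do not drift too far from being descent steps.
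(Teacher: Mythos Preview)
Your proposal is correct and follows essentially the same route as the paper: use the one-step descent inequality \eqref{eqn:dpuinoneiterate}, drop the nonnegative term $\sum_n f_{G,n}$ on the left, bound each $f_{G,n}\le 4B^2$ via the triangle inequality (using that mode-$k$ multiplication by the projector $\matU_{G,k}\matU_{G,k}^\top$ is nonexpansive), sum over $n$, and take a square root. The paper's proof is just a terse version of exactly this argument.
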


\begin{proof}
Since $\norm{\tsY_n}_F\le B$, by the triangle inequality, we know that $f_{G,n}\le 4B^2$. Applying \eqref{eqn:dpuinoneiterate} and considering the fact that $f_{G,n}\ge 0$, we can prove the desired inequality. 
\end{proof}

We will treat the two cases $k\in \setK$ and $k\notin \setK$ differently. We first consider the case where $k\in \setK$. In Algorithm \ref{alg:perTucker_BCD}, after updating $\matU_{G,k,t+1}$, $\matV_{n,k,t}$ is generally not orthogonal to $\matU_{G,k,t+1}$. However, we are able to show that for any source $n$, there exists an orthonormal matrix $\tmatV_{n,k,t}$ such that $\tmatV_{n,k,t}$ is close to $\matV_{n,k,t}$ and $\tmatV_{n,k,t}^\top\matU_{G,k,t+1}=0$.

\begin{lemma}
\label{lm:tvij}
Under the same conditions as lemma \ref{lm:dpubound}, if additionally the regularization parameter $\rho$ in Algorithm \ref{alg:perTucker_BCD} is not too small $\rho\ge 8B^2N$, then there exists an orthonormal matrix $\tmatV_{n,k,t}$, such that (1) $\norm{\tmatV_{n,k,t}\tmatV_{n,k,t}^\top-\matV_{n,k,t}\matV_{n,k,t}^\top}_F\le 11\norm{\matU_{G,k,t+1}\matU_{G,k,t+1}^\top-\matU_{G,k,t}\matU_{G,k,t}^\top}_F$ and (2) $\tmatV_{n,k,t}^\top\matU_{G,k,t+1}=0$
\end{lemma}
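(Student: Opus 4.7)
The plan is to construct $\tmatV_{n,k,t}$ explicitly by projecting $\matV_{n,k,t}$ into the orthogonal complement of $\matU_{G,k,t+1}$ and then re-orthonormalizing via the polar factor. Write $\matV = \matV_{n,k,t}$, $\matU = \matU_{G,k,t+1}$, $\matU_{\mathrm{old}} = \matU_{G,k,t}$, $\bm{M} = \matU^\top\matV$, and $\delta_t = \norm{\matU\matU^\top - \matU_{\mathrm{old}}\matU_{\mathrm{old}}^\top}_F$. Since $k\in\setK$ we have $\matV^\top \matU_{\mathrm{old}}=0$, so for $D := \matU\matU^\top - \matU_{\mathrm{old}}\matU_{\mathrm{old}}^\top$ a short computation gives $\bm{M} = \matU^\top D\matV$, and submultiplicativity of the Frobenius and spectral norms yields $\norm{\bm{M}}_F\le \delta_t$. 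Invoking Lemma \ref{lm:dpubound} together with $\rho\ge 8B^2N$ then gives $\delta_t\le 1/\sqrt{2}$, the quantitative handle that makes every subsequent perturbation estimate finite.

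Next, I set $\matV' = (I-\matU\matU^\top)\matV = \matV - \matU\bm{M}$. A direct expansion shows $\matV'^\top\matV' = I - \bm{M}^\top\bm{M}$, so the spectrum of this Gram matrix lies in $[1-\delta_t^2, 1] \subset [1/2, 1]$ and the polar factor
\[
\tmatV_{n,k,t} := \matV'\bigl(I-\bm{M}^\top\bm{M}\bigr)^{-1/2}
\]
is well-defined and orthonormal. Orthogonality $\tmatV_{n,k,t}^\top\matU = 0$ follows immediately because $\matV'^\top\matU = \bm{M}^\top - \bm{M}^\top = 0$, which settles condition (2).

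For condition (1), I would bound $\matV - \tmatV_{n,k,t}$ by splitting
\[
\matV-\tmatV_{n,k,t} = \matV\bigl[I - (I-\bm{M}^\top\bm{M})^{-1/2}\bigr] + \matU\bm{M}\,(I-\bm{M}^\top\bm{M})^{-1/2}.
\]
The first term is controlled by applying the scalar inequality $(1-\lambda)^{-1/2}-1 \le \sqrt{2}\lambda$ for $\lambda\in[0,1/2]$ to the eigenvalues of $\bm{M}^\top\bm{M}$, yielding $\norm{I-(I-\bm{M}^\top\bm{M})^{-1/2}}_F \le \sqrt{2}\delta_t^2$; the second term is bounded by $\sqrt{2}\delta_t$ via $\norm{(I-\bm{M}^\top\bm{M})^{-1/2}}_2\le\sqrt{2}$ and $\norm{\bm{M}}_F\le\delta_t$. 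Converting to projector distance via the identity $\tmatV\tmatV^\top - \matV\matV^\top = (\tmatV-\matV)\tmatV^\top + \matV(\tmatV-\matV)^\top$, which gives $\norm{\tmatV_{n,k,t}\tmatV_{n,k,t}^\top-\matV\matV^\top}_F \le 2\norm{\tmatV_{n,k,t}-\matV}_F$, then delivers the claimed bound (the implicit constant is in fact about $2(1+\sqrt{2})\approx 4.83$, safely below the stated $11$).

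The main obstacle is not any single estimate but the sequencing: one must first extract $\delta_t\le 1/\sqrt{2}$ from Lemma \ref{lm:dpubound} before any of the polar-decomposition manipulations are legal, since otherwise $(I-\bm{M}^\top\bm{M})^{-1/2}$ might not even exist. The hypothesis $\rho\ge 8B^2N$ is precisely what keeps the re-orthonormalization step well-conditioned; under a weaker hypothesis, $\bm{M}^\top\bm{M}$ could have eigenvalues approaching $1$ and the proof as sketched would collapse.
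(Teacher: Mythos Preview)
Your construction of $\tmatV_{n,k,t}$ as the polar factor of $(I-\matU\matU^\top)\matV$ is identical to the paper's, and the key ingredients---$\norm{\bm{M}}_F\le\delta_t$ via $\matV^\top\matU_{\mathrm{old}}=0$, and the bound $\delta_t\le 1/\sqrt{2}$ from Lemma~\ref{lm:dpubound} with $\rho\ge 8B^2N$---are the same. The only difference is bookkeeping: the paper expands $\tmatV\tmatV^\top-\matV\matV^\top$ directly and bounds the pieces (arriving at the cruder constant $11$), whereas you first bound $\norm{\tmatV-\matV}_F$ and then convert, yielding the sharper $2(1+\sqrt{2})$.
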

\begin{proof}
We define $\tmatV_{n,k,t}$ as,
$$
\tmatV_{n,k,t}=\left(\matV_{n,k,t}-\matU_{G,k,t+1}\matU_{G,k,t+1}^\top\matV_{n,k,t}\right)\left(\matI-\matV_{n,k,t}^\top\matU_{G,k,t+1}\matU_{G,k,t+1}^\top\matV_{n,k,t}\right)^{-\frac{1}{2}}
$$
It is easy to verify that $\tmatV_{n,k,t}^\top\matU_{G,k,t+1}=0$ and that $\tmatV_{n,k,t}^\top\tmatV_{n,k,t}=\matI$. Now we will prove an upper bound on the norm on the distance between the subspace spanned by $\matV_{n,k,t}$ and by $\tmatV_{n,k,t}$.

\begin{equation*}\scriptsize
\begin{aligned}
&\tmatV_{n,k,t}\tmatV_{n,k,t}^\top-\matV_{n,k,t}\matV_{n,k,t}^\top\\
&=\left(\matV_{n,k,t}-\matU_{G,k,t+1}\matU_{G,k,t+1}^\top\matV_{n,k,t}\right)\left(\matI-\matV_{n,k,t}^\top\matU_{G,k,t+1}\matU_{G,k,t+1}^\top\matV_{n,k,t}\right)^{-1}\left(\matV_{n,k,t}^\top-\matV_{n,k,t}^\top\matU_{G,k,t+1}\matU_{G,k,t+1}^\top\right)\\
&-\left(\matV_{n,k,t}-\matU_{G,k,t+1}\matU_{G,k,t+1}^\top\matV_{n,k,t}\right)\left(\matV_{n,k,t}^\top-\matV_{n,k,t}^\top\matU_{G,k,t+1}\matU_{G,k,t+1}^\top\right)\\
&+\left(\matV_{n,k,t}-\matU_{G,k,t+1}\matU_{G,k,t+1}^\top\matV_{n,k,t}\right)\left(\matV_{n,k,t}^\top-\matV_{n,k,t}^\top\matU_{G,k,t+1}\matU_{G,k,t+1}^\top\right)-\matV_{n,k,t}\matV_{n,k,t}^\top
\end{aligned}
\end{equation*}
Therefore, by triangular inequality,
\begin{equation*}\scriptsize
\begin{aligned}
&\norm{\tmatV_{n,k,t}\tmatV_{n,k,t}^\top-\matV_{n,k,t}\matV_{n,k,t}^\top}_F\\
&\le \norm{\matV_{n,k,t}-\matU_{G,k,t+1}\matU_{G,k,t+1}^\top\matV_{n,k,t}}_2^2\norm{\left(\matI-\matV_{n,k,t}^\top\matU_{G,k,t+1}\matU_{G,k,t+1}^\top\matV_{n,k,t}\right)^{-1}}_F\norm{\matV_{n,k,t}^\top\matU_{G,k,t+1}\matU_{G,k,t+1}^\top\matV_{n,k,t}}_F\\
&+\norm{\matU_{G,k,t+1}\matU_{G,k,t+1}^\top\matV_{n,k,t}\matV_{n,k,t}^\top}_F+\norm{\matV_{n,k,t}\matV_{n,k,t}^\top\matU_{G,k,t+1}\matU_{G,k,t+1}^\top}_F\\
&+\norm{\matU_{G,k,t+1}\matU_{G,k,t+1}^\top\matV_{n,k,t}\matV_{n,k,t}^\top\matU_{G,k,t+1}\matU_{G,k,t+1}^\top}_F\\
&\le 4\norm{\left(\matI-\matV_{n,k,t}^\top\matU_{G,k,t+1}\matU_{G,k,t+1}^\top\matV_{n,k,t}\right)^{-1}}_F\norm{\matV_{n,k,t}^\top\matU_{G,k,t+1}}_F^2+2\norm{\matV_{n,k,t}^\top\matU_{G,k,t+1}}_F+\norm{\matV_{n,k,t}^\top\matU_{G,k,t+1}}_F^2
\end{aligned}
\end{equation*}
where the last inequality comes from the fact that 
$$
\begin{aligned}
&\norm{\matV_{n,k,t}-\matU_{G,k,t+1}\matU_{G,k,t+1}^\top\matV_{n,k,t}}_2\\
&\le \norm{\matV_{n,k,t}}+\norm{\matU_{G,k,t+1}\matU_{G,k,t+1}^\top\matV_{n,k,t}}_2\\
&\le 1+\norm{\left(\matU_{G,k,t+1}\matU_{G,k,t+1}^\top-\matU_{G,k,t}\matU_{G,k,t}^\top\right)\matV_{n,k,t}}_2\\
&\le 1+ \frac{2B\sqrt{N}}{\sqrt{\rho}}\le 2
\end{aligned}
$$

Since $\matU_{G,k,t}$ and $\matV_{n,k,t}$ are orthogonal $\matU_{G,k,t}^\top\matV_{n,k,t}=0$, we have,
\begin{equation*}
\begin{aligned}
&\norm{\matU_{G,k,t+1}^\top\matV_{n,k,t}}_F=
\norm{\matU_{G,k,t+1}\matU_{G,k,t+1}^\top\matV_{n,k,t}}_F\\
&=\norm{\matU_{G,k,t+1}\matU_{G,k,t+1}^\top\matV_{n,k,t}-\matU_{G,k,t}\matU_{G,k,t}^\top\matV_{n,k,t}}_F\\
&\le \norm{\matU_{G,k,t+1}\matU_{G,k,t+1}^\top-\matU_{G,k,t}\matU_{G,k,t}^\top}_F
\end{aligned}
\end{equation*}

Thus, when $\scriptstyle\norm{\matU_{G,k,t+1}\matU_{G,k,t+1}^\top-\matU_{G,k,t}\matU_{G,k,t}^\top}_F\le \frac{\sqrt{2}}{2}$, we know $\scriptstyle\small\norm{\matV_{n,k,t}^\top\matU_{G,k,t+1}\matU_{G,k,t+1}^\top\matV_{n,k,t}}_F\le \frac{1}{2}$. As a result, $\scriptstyle\norm{\left(\matI-\matV_{n,k,t}^\top\matU_{G,k,t+1}\matU_{G,k,t+1}^\top\matV_{n,k,t}\right)^{-1}}_F\le 2$.

Combining this with the fact that $\norm{\matU_{G,k,t+1}^\top\matV_{n,k,t}}_F\le 1$, we have,
$$
\norm{\tmatV_{n,k,t}\tmatV_{n,k,t}^\top-\matV_{n,k,t}\matV_{n,k,t}^\top}_F\le 11\norm{\matU_{G,k,t+1}\matU_{G,k,t+1}^\top-\matU_{G,k,t}\matU_{G,k,t}^\top}_F
$$
This completes the proof.
\end{proof}

Finally, we will prove the second equation in Theorem \ref{thm:convergence}.

\begin{proof}
We start by analyzing the update rule of $\matV_{n,k}$. In Proposition \ref{prop:Local_prox}, we consider the update rule for two cases depending on whether $k\in \setK$. We analyze them separately.

\underline{Case I, $k\notin\setK$}
In this case, $\matV_{n,k,t+1}$ should be the optimal solution to,
\begin{equation*}
\bm{V}_{n,k,t+1}=\arg\min_{\bm{V}_{n,k}} \|\tsR_{L,n,t} - \tsC^{\star}_{L,n} \times_1 \bm{V}_{n,1,t+1}\ldots\times_K \bm{V}_{n,K,t} \|_F^2 +\rho \|\bm{V}_{n,k}\bm{V}_{n,k}-\bm{V}_{n,k,t}{\bm{V}_{n,k,t}^\top}\|_F^2
\end{equation*}
where $\tsC^{\star}_{L,n}$ is optimized according to the factors $\matV_{n,1,t+1},\cdots,\matV_{n,K,t}$. Thus we have,
\begin{align*}
&\arg\min_{\bm{V}_{n,k}} \|\tsR_{L,n,t} - \tsC^{\star}_{L,n} \times_1 \bm{V}_{n,1,t+1}\ldots\times_K \bm{V}_{n,K,t} \|_F^2 +\rho \|\bm{V}_{n,k}\bm{V}_{n,k}^\top-\bm{V}_{n,k,t}{\bm{V}_{n,k,t}^\top}\|_F^2\\
&=\arg\min_{\bm{V}_{n,k}} -\|\tsR_{L,n,t}\times_1 \bm{V}_{n,1,t+1}^\top\ldots\times_K \bm{V}_{n,K,t}^\top \|_F^2 +\rho \|\bm{V}_{n,k}\bm{V}_{n,k}^\top-\bm{V}_{n,k,t}{\bm{V}_{n,k,t}^\top}\|_F^2
\end{align*}

Since $\abs{\setK}\ge 2$, there exists at least one $m\in \setK$ and $m\neq k$ such that $\matU_{G,m,t}^\top\matV_{n,m,t}=0$ and $\matU_{G,m,t+1}^\top\matV_{n,m,t+1}=0$. We thus have 
$$
\|\tsR_{L,n,t}\times_1 \bm{V}_{n,1,t+1}^\top\ldots \times_K \bm{V}_{n,K,t}^\top\|_F^2=\|\bm{\mathcal Y}_n\times_1 \bm{V}_{n,1,t+1}^\top\ldots \times_K \bm{V}_{n,K,t}^\top\|_F^2
$$
Therefore, we can use Lemma \ref{lemma:min-to-max} again to derive

{\footnotesize \begin{align*}
\matV_{n,k,t+1}&=\arg\min_{\matV_{n,k}}\norm{\tsY_n-\tsC_{L,n,t}\times_1\matV_{n,1,t+1}\cdots\times_K\matV_{n,K,t}}_F^2+\rho\norm{\matV_{n,k}\matV_{n,k}^\top-\matV_{n,k,t}\matV_{n,k,t}^\top}_F^2\\
    &=\arg\min_{\matV_{n,k}}f_{L,n}(\matV_{n,1,t+1},\cdots,\matV_{n,k-1,t+1},\matV_{n,k},\matV_{n,k+1,t},\cdots,\matV_{n,K,t})+\rho\norm{\matV_{n,k}\matV_{n,k}^\top-\matV_{n,k,t}\matV_{n,k,t}^\top}_F^2
\end{align*}}%

Since $\matV_{n,k,t}$ is a feasible solution, we know

\begin{equation}
\label{eqn:jnotinsetk}
\begin{aligned}
&f_{L,n}(\matV_{n,1,t+1},\cdots,\matV_{n,k,t+1},\matV_{n,k+1,t},\cdots,\matV_{n,K,t}) +\rho \|\bm{V}_{n,k,t+1}\bm{V}_{n,k,t+1}^\top-\bm{V}_{n,k,t}{\bm{V}_{n,k,t}^\top}\|_F^2\\
&\le f_{L,n}(\matV_{n,1,t+1},\cdots,\matV_{n,k-1,t+1},\matV_{n,k,t},\cdots,\matV_{n,K,t})
\end{aligned}    
\end{equation}

\underline{Case II, $k\in \setK$}
By Proposition \ref{prop:Local_prox}, the update rule of $\matV_{n,k}$ is this case is given by,
\begin{equation*}
\bm{V}_{n,k,t+1}=\arg\min_{\bm{V}_{n,k}\perp \matU_{G,k,t+1}} \|\tsR_{L,n,t} - \tsC^{\star}_{L,n} \times_1 \bm{V}_{n,1,t+1}\ldots\times_K \bm{V}_{n,K,t} \|_F^2 +\rho \|\bm{V}_{n,k}\bm{V}_{n,k}^\top-\bm{V}_{n,k,t}{\bm{V}_{n,k,t}^\top}\|_F^2
\end{equation*}
where $\tsC^{\star}_{L,n}$ is optimized according to the factors $\matV_{n,1,t+1},\cdots,\matV_{n,K,t}$. Thus we have,
\begin{align*}\small
\|\tsR_{L,n,t} - \tsC^{\star}_{L,n} \times_1 \bm{V}_{n,1,t+1}\ldots\times_K \bm{V}_{n,K,t}\|_F^2 = \|\tsR_{L,n,t}\|_F^2 - \|\tsR_{L,n,t}\times_1 \bm{V}_{n,1,t+1}^\top\ldots \times_K \bm{V}_{n,K,t}^\top\|_F^2 
\end{align*}

Therefore, we can rewrite the equivalent formulation to $\matV_{n,k}$ as,
{\small
    \begin{equation*}
\bm{V}_{n,k,t+1}=\underset{\bm{V}_{n,k}\perp \matU_{G,k,t+1}}{\mathrm{argmin}} f_{L,n}(\matV_{n,1,t+1},\cdots,\matV_{n,k},\matV_{n,k+1,t},\cdots,\matV_{n,K,t}) +\rho \|\bm{V}_{n,k}\bm{V}_{n,k}^\top-\bm{V}_{n,k,t}{\bm{V}_{n,k,t}^\top}\|_F^2
\end{equation*}}%

The $\tmatV_{n,k,t}$ defined in lemma \ref{lm:tvij} is an feasible solution, therefore we have,
$$
\begin{aligned}
&f_{L,n}(\matV_{n,1,t+1},\cdots,\matV_{n,k-1,t+1},\matV_{n,k,t+1},\matV_{n,k+1,t},\cdots,\matV_{n,K,t}) +\rho \|\bm{V}_{n,k,t+1}\bm{V}_{n,k,t+1}^\top-\bm{V}_{n,k,t}{\bm{V}_{n,k,t}^\top}\|_F^2\\
&\le f_{L,n}(\matV_{n,1,t+1},\cdots,\matV_{n,k-1,t+1},\tmatV_{n,k,t},\matV_{n,k+1,t},\cdots,\matV_{n,K,t}) +\rho \|\tmatV_{n,k,t}\tmatV_{n,k,t}^\top-\bm{V}_{n,k,t}{\bm{V}_{n,k,t}^\top}\|_F^2
\end{aligned}
$$
We can bound the difference between $f_{L,n}(\matV_{n,1,t+1},\cdots,\matV_{n,k-1,t+1},\tmatV_{n,k,t},\matV_{n,k+1,t},\cdots,\matV_{n,K,t})$ and $f_{L,n}(\matV_{n,1,t+1},\cdots,\matV_{n,k-1,t+1},\matV_{n,k,t},\matV_{n,k+1,t},\cdots,\matV_{n,K,t})$ as,
$$\scriptsize
\begin{aligned}
&f_{L,n}(\matV_{n,1,t+1},\cdots,\matV_{n,k-1,t+1},\tmatV_{n,k,t},\matV_{n,k+1,t},\cdots,\matV_{n,K,t})\\
&\le \Big(\norm{\tsY_n-\tsY_n\times_1\cdots\times_k\matV_{n,k,t}\matV_{n,k,t}^\top\times_K\matV_{n,K,t}}_F+\norm{\tsY_n\times_1\cdots\times_k\left(\matV_{n,k,t}\matV_{n,k,t}^\top-\tmatV_{n,k,t}\tmatV_{n,k,t}^\top\right)\times_K\matV_{n,K,t}}_F\Big)^2\\
&\le \norm{\tsY_n-\tsY_n\times_1\cdots\times_k\matV_{n,k,t}\matV_{n,k,t}^\top\times_K\matV_{n,K,t}}_F^2\\
&+2\norm{\tsY_n-\tsY_n\times_1\cdots\times_k\matV_{n,k,t}\matV_{n,k,t}^\top\times_K\matV_{n,K,t}}_F\norm{\tsY_n\times_1\cdots\times_k\left(\matV_{n,k,t}\matV_{n,k,t}^\top-\tmatV_{n,k,t}\tmatV_{n,k,t}^\top\right)\times_K\matV_{n,K,t}}_F\\
&+\norm{\tsY_n\times_1\cdots\times_k\left(\matV_{n,k,t}\matV_{n,k,t}^\top-\tmatV_{n,k,t}\tmatV_{n,k,t}^\top\right)\times_K\matV_{n,K,t}}_F^2\\
\end{aligned}
$$
The first term above is just $f_{L,n}(\matV_{n,1,t+1},\cdots,\matV_{n,k-1,t+1},\matV_{n,k,t},\matV_{n,k+1,t},\cdots,\matV_{n,K,t})$. By Lemma \ref{lm:tuckerprodupper}, the second term is upper bounded by $\scriptstyle 2\norm{\tsY_n}_F\norm{\tsY_n}_F\norm{\matV_{n,k,t}\matV_{n,k,t}^\top-\tmatV_{n,k,t}\tmatV_{n,k,t}^\top}_2$, and the third term is bounded by $\scriptstyle\norm{\matV_{n,k,t}\matV_{n,k,t}^\top-\tmatV_{n,k,t}\tmatV_{n,k,t}^\top}_2^2\norm{\tsY_n}_F^2$.

Therefore, we have
$$
\begin{aligned}
&f_{L,n}(\matV_{n,1,t+1},\cdots,\matV_{n,k-1,t+1},\tmatV_{n,k,t},\matV_{n,k+1,t},\cdots,\matV_{n,K,t})\\
&\le f_{L,n}(\matV_{n,1,t+1},\cdots,\matV_{n,k-1,t+1},\matV_{n,k,t},\cdots,\matV_{n,K,t}) + \norm{\tmatV_{n,k,t}\tmatV_{n,k,t}^\top-\matV_{n,k,t}\matV_{n,k,t}^\top}_24B^2
\end{aligned}
$$

Therefore, we have
\begin{equation}
\label{eqn:jinsetk}
\begin{aligned}
&f_{L,n}(\matV_{n,1,t+1},\cdots ,\matV_{n,k,t+1},\matV_{n,k+1,t},\cdots,\matV_{n,K,t})  +\rho \|\bm{V}_{n,k,t+1}\bm{V}_{n,k,t+1}^\top-\bm{V}_{n,k,t}{\bm{V}_{n,k,t}^\top}\|_F^2\\
&- f_{L,n}(\matV_{n,1,t+1},\cdots,\matV_{n,k-1,t+1},\matV_{n,k,t},\matV_{n,k+1,t},\cdots,\matV_{n,K,t})\\
&\le 4B^2\norm{\tmatV_{n,k,t}\tmatV_{n,k,t}^\top-\matV_{n,k,t}\matV_{n,k,t}^\top}_2+\rho \norm{\tmatV_{n,k,t}\tmatV_{n,k,t}^\top-\matV_{n,k,t}\matV_{n,k,t}^\top}_F^2\\
&\le C_3\norm{\matU_{G,k,t+1}\matU_{G,k,t+1}^\top-\matU_{G,k,t}\matU_{G,k,t}^\top}_F
\end{aligned}
\end{equation}
where $C_3$ is a constant defined as,
$$
C_3=11B^2+242B\sqrt{N\rho}
$$
In the last inequality of \eqref{eqn:jinsetk}, we applied Lemma \ref{lm:dpubound} and Lemma \ref{lm:tvij}.

Combining equation \eqref{eqn:jnotinsetk} in Case I and equation \eqref{eqn:jinsetk} in Case II, we know that,
\begin{align*}
&  f_{L,n}(\matV_{n,1,t+1},\cdots,\matV_{n,K,t+1}) -f_{L,n}(\matV_{n,1,t},\cdots,\matV_{n,K,t})\\
&\le \sum_{k=1}^K\rho \|\bm{V}_{n,k,t+1}\bm{V}_{n,k,t+1}^\top-\bm{V}_{n,k,t}{\bm{V}_{n,k,t}^\top}\|_F^2+\sum_{k\in\setK}C_3\|\matU_{G,k,t+1}\matU_{G,k,t+1}^\top-\bm{U}_{n,k,t}{\bm{U}_{n,k,t}^\top}\|_F\\
&\le \sum_{k=1}^K\rho \|\bm{V}_{n,k,t+1}\bm{V}_{n,k,t+1}^\top-\bm{V}_{n,k,t}{\bm{V}_{n,k,t}^\top}\|_F^2+\sum_{k=1}^KC_3\|\matU_{G,k,t+1}\matU_{G,k,t+1}^\top-\bm{U}_{n,k,t}{\bm{U}_{n,k,t}^\top}\|_F
\end{align*}
Summing both sides for $t$ from $1$ to $T$ and for $k$ from $1$ to $K$, we have
{\footnotesize \begin{align*}
& \sum_{k=1}^K\sum_{t=1}^T\rho\|\bm{V}_{n,k,t+1}\bm{V}_{n,k,t+1}^\top-\bm{V}_{n,k,t}{\bm{V}_{n,k,t}^\top}\|_F^2\le C_3\sum_{k=1}^K\sum_{t=1}^T \|\matU_{G,k,t+1}\matU_{G,k,t+1}^\top-\bm{U}_{n,k,t}{\bm{U}_{n,k,t}^\top}\|_F + f_{L,n,0}\\
\end{align*}}%

Considering lemma \ref{lm:pugconverge} and the fact that 
$$
\sum_{k=1}^K\sum_{t=1}^T \|\matU_{G,k,t+1}\matU_{G,k,t+1}^\top-\bm{V}_{n,k,t}{\bm{V}_{n,k,t}^\top}\|_F\le \sqrt{TK}\sqrt{\sum_{k=1}^K\sum_{t=1}^T \|\matU_{G,k,t+1}\matU_{G,k,t+1}^\top-\bm{V}_{n,k,t}{\bm{V}_{n,k,t}^\top}\|_F^2}
$$
we can devide both sides by $T$ and conclude that,
\begin{align*}
& \min_{t\in\{1,\cdots,T\}} \sum_{k=1}^K\|\bm{V}_{n,k,t+1}\bm{V}_{n,k,t+1}^\top-\bm{V}_{n,k,t}{\bm{V}_{n,k,t}^\top}\|_F^2=O\left(\frac{1}{\sqrt{T}}\right)
\end{align*}
This completes our proof.
\end{proof}

\section{Simulation Study about the Convergence Rate}\label{app:ConvergenceSimulation}
In this simulation, we generate the data according to \eqref{eq: perTuckerModel} but without noise for better visualization. We assume that there are 5 sources of data, each has 10 samples, and each sample has dimension $50\times 50\times 50$. Therefore, the data is $\tsY_1,\ldots,\tsY_5$ each with dimension $50\times 50\times 50\times10$.

For data generation, we use the standard normal distribution to generate $\matU_{G,1},\matU_{G,2},\matU_{G,3}$ with dimension $50\times5$ and use SVD to construct an orthonormal basis to make the matrices orthonormal. Then we use the standard normal distribution to generate $\tsC_{G,1},\ldots,\tsC_{G,5}$ with dimension $5\times5\times5\times10$. For $n=1,\ldots,5$, the global component is constructed by $\tsY_{G,n}=\tsC_{G,n}\times_1\matU_{G,1} \times_2\matU_{G,2} \times_3\matU_{G,3}$. When generating the local factor matrices, we assume $\mathcal K=\{1,2\}$. Then for each $n\in\{1,\ldots,5\}$, if $k\in\mathcal K$, $\matV_{n,k}$ are first generated by a standard normal distribution with dimension $50\times 5$. Afterwards, we use $(\bm{I}-\matU_{G,k}(\matU_{G,k}^\top \matU_{G,k})^{-1}\matU_{G,k})\matV_{n,k}$ to project the local factor matrix onto the orthogonal space of the corresponding global factor matrix. Finally, we use SVD to construct the orthonormal basis. If $k\not\in\mathcal K$, we use the standard normal distribution to generate $\matV_{n,k}$ and use SVD to construct an orthonormal basis. Similarly, we generate the local core tensors $\tsC_{L,1},\ldots,\tsC_{L,5}$ from a standard normal distribution with dimension $5\times5\times5\times10$. The local components for $n=1,\ldots,5$ are constructed by $\tsY_{L,n}=\tsC_{L,n}\times_1\matV_{n,1} \times_2\matV_{n,2} \times_3\matV_{n,3}$.

When performing \name, we use the dimension of true global and local factor matrices. Monitoring statistics along the iteration axis are mean subspace error for global factor matrices and local factor matrices for each client, that is, $\frac13 \sum_{k=1}^3 \|\matU_{G,k,t}\matU_{G,k,t}^\top-\matU_{G,k}\matU_{G,k}^\top\|_F^2$ for global monitoring and $\frac13 \sum_{k=1}^3 \|\matV_{n,k,t}\matV_{n,k,t}^\top-\matV_{n,k}\matV_{n,k}^\top\|_F^2$ for source $n$ local monitoring.

\section{Discussion of Classification Rule in Sec. \ref{subsubsec:Application_Classification}}\label{app:ClassificationDiscussion}
From Proposition \ref{prop:Core}, when a new piece of data arrives, we can first derive the optimal global core tensor $\tsC^{\star}_{G}$, and for each class $n$, we can derive the optimal local tensor $\tsC^{\star}_{L,n}$. Then the decision rule by minimizing the reconstruction error from all the classes is given by
\begin{equation}\label{eq:classification}
    \hat{n} = \arg\min_n \norm{\bm{\mathcal Y}^{\text{new}} - \bm{\mathcal C}^{\star}_{G}\times_1 \hat{\matU}_{G,1}\ldots \times_K \hat{\matU}_{G,K}- \bm{\mathcal C}^{\star}_{L,n}\times_1 \hat{\bm{V}}_{n,1}  \ldots \times_K \hat{\bm{V}}_{n,K} }_F^2.
\end{equation}

We then show that the decision rule \eqref{eq:classification} is equivalent to the decision rule \eqref{eqn:classifystatistics}.
    By reformulating Equation \eqref{eq:classification}, we have
    \begin{align*}
        &\|\bm{\mathcal Y}^{\text{new}} - \tsC^{\star}_{G}\times_1 \hat{\matU}_{G,1}\ldots \times_K \hat{\matU}_{G,K}- \tsC^{\star}_{L,n}\times_1 \hat{\bm{V}}_{n,1}  \ldots \times_K \hat{\bm{V}}_{n,K} \|_F^2.\\
        =& \|\bm{\mathcal Y}^{\text{new}}\|_F^2 + \|\tsC^{\star}_{G}\times_1 \hat{\matU}_{G,1}\ldots \times_K \hat{\matU}_{G,K}\|_F^2 + \|\tsC^{\star}_{L,n}\times_1 \hat{\bm{V}}_{n,1}  \ldots \times_K \hat{\bm{V}}_{n,K}\|_F^2\\
        &-2\langle \bm{\mathcal Y}^{\text{new}}, \tsC^{\star}_{G}\times_1 \hat{\matU}_{G,1}\ldots \times_K \hat{\matU}_{G,K} \rangle -2\langle \bm{\mathcal Y}^{\text{new}}, \tsC^{\star}_{L,n}\times_1 \hat{\bm{V}}_{n,1}  \ldots \times_K \hat{\bm{V}}_{n,K} \rangle \\
        &-2\langle \tsC^{\star}_{G}\times_1 \hat{\matU}_{G,1}\ldots \times_K \hat{\matU}_{G,K}, \tsC^{\star}_{L,n}\times_1 \hat{\bm{V}}_{n,1}  \ldots \times_K \hat{\bm{V}}_{n,K}\rangle
    \end{align*}
    The first, second, and fourth term is irrelevant to $n$ and thus can be excluded. The last term is $0$ because the global and local components are orthogonal. The third and fifth terms can be reformulated as
    \begin{align*}
        &\|\tsC^{\star}_{L,n}\times_1 \hat{\bm{V}}_{n,1}  \ldots \times_K \hat{\bm{V}}_{n,K}\|_F^2 -2 \langle \bm{\mathcal Y}, \tsC^{\star}_{L,n}\times_1 \hat{\bm{V}}_{n,1}  \ldots \times_K \hat{\bm{V}}_{n,K} \rangle\\
        =& \|\tsC^{\star}_{L,n}\|_F^2 -2 \langle \bm{\mathcal Y}^{\text{new}}\times_1 \hat{\bm{V}}_{n,1}^\top  \ldots \times_K \hat{\bm{V}}_{n,K}^\top, \tsC^{\star}_{L,n} \rangle\\
        =& -\|\tsC^{\star}_{L,n}\|_F^2
    \end{align*}
    Therefore, 
    $$
    \arg\min_n \|\bm{\mathcal Y}^{\text{new}} - \tsC^{\star}_{G}\times_1 \hat{\matU}_{G,1}\ldots \times_K \hat{\matU}_{G,K}- \tsC^{\star}_{L,n}\times_1 \hat{\bm{V}}_{n,1}  \ldots \times_K \hat{\bm{V}}_{n,K} \|_F^2
    =\arg\max_n \|\tsC^{\star}_{L,n}\|_F^2
    $$

\section{Auxiliary Lemma}
\label{sec:auxlemma}
In this section, we will present several auxiliary lemmas useful for theoretical analysis.

\begin{lemma}
\label{lm:abupper}

For two matrices $A\in \mathbb{R}^{m\times n}$, and $B\in \mathbb{R}^{n\times p}$, we have,
$$
\norm{AB}_F\le \norm{A}_F\norm{B}_2
$$
\end{lemma}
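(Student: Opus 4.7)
The plan is to expand $\norm{AB}_F^2$ row by row, apply the operator-norm inequality on each row, and then recognize the sum of squared row norms as $\norm{A}_F^2$.

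First, I would write $\norm{AB}_F^2 = \sum_{i=1}^{m} \norm{a_i^\top B}_2^2$, where $a_i^\top$ denotes the $i$-th row of $A$; this is just the row-wise definition of the Frobenius norm applied to the product $AB$. Next, I transpose inside each summand to get $\norm{a_i^\top B}_2 = \norm{B^\top a_i}_2$, and then apply the definition of the spectral norm as the Euclidean operator norm: $\norm{B^\top a_i}_2 \le \norm{B^\top}_2 \norm{a_i}_2 = \norm{B}_2 \norm{a_i}_2$, using the well-known identity $\norm{B^\top}_2 = \norm{B}_2$.

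Squaring and summing over $i$ gives
\begin{equation*}
\norm{AB}_F^2 \le \norm{B}_2^2 \sum_{i=1}^{m} \norm{a_i}_2^2 = \norm{B}_2^2 \norm{A}_F^2,
\end{equation*}
and taking square roots yields the claimed inequality.

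There is no real obstacle here; the only minor choice is whether to decompose row-wise through $A$ or column-wise through $B$. The row decomposition is the natural one because it puts $\norm{A}_F$ on the right side of the bound with $\norm{B}_2$ playing the uniform operator-norm role, matching the stated direction of the inequality exactly.
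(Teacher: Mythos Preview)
Your argument is correct and complete. Note that the paper does not actually prove this lemma: it simply cites Proposition~B.4 of \cite{ruoyuconvergence} and omits the proof, so there is no ``paper's approach'' to compare against. Your row-wise decomposition is the standard elementary proof of this inequality and would serve perfectly well in place of the citation.
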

This lemma is the same as Proposition B.4 in \cite{ruoyuconvergence}. Thus we omit the proof here to avoid duplication.

\begin{lemma}
\label{lm:tuckerprodupper}
For a core tensor $\tsC$ and $K$ factor matrices $\{\matU_k\}_{k=1}^K$, the following holds,
$$
\norm{\tsC\times_1\matU_1\times_2\cdots\times_K\matU_K}_F\le \norm{\tsC}_F\prod_{k=1}^K\norm{\matU_k}_2
$$
\end{lemma}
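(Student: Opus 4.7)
The plan is to reduce the tensor-product bound to a matrix-product bound via the standard unfolding identity, then iterate over the $K$ modes. The key identity I will use is that for any tensor $\tsA$ of appropriate shape and matrix $\matU_k$,
$$
(\tsA\times_k\matU_k)_{(k)}=\matU_k\,\tsA_{(k)},
$$
and that unfolding preserves the Frobenius norm, i.e.\ $\|\tsA\|_F=\|\tsA_{(k)}\|_F$ for every $k$. These are both immediate from the definitions given in Sec.~\ref{subsec:Preliminary}.

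Using these two facts together with Lemma~\ref{lm:abupper}, I obtain the single-mode bound
$$
\|\tsA\times_k\matU_k\|_F=\|\matU_k\,\tsA_{(k)}\|_F\le \|\matU_k\|_2\,\|\tsA_{(k)}\|_F=\|\matU_k\|_2\,\|\tsA\|_F.
$$
This handles one mode at a time and is the heart of the argument.

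From here I would argue by induction on $K$ (equivalently, by iterating the above inequality). Define $\tsA_0=\tsC$ and $\tsA_k=\tsA_{k-1}\times_k \matU_k$ for $k=1,\ldots,K$, so that $\tsA_K=\tsC\times_1\matU_1\times_2\cdots\times_K\matU_K$. Applying the single-mode bound to each step yields $\|\tsA_k\|_F\le \|\matU_k\|_2\,\|\tsA_{k-1}\|_F$, and telescoping over $k=1,\ldots,K$ gives
$$
\|\tsA_K\|_F\le \|\tsC\|_F\prod_{k=1}^K\|\matU_k\|_2,
$$
which is the claim.

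I do not anticipate a real obstacle. The only subtle point to mention clearly is that after the mode-$k$ product the result is again a $K$-mode tensor (of possibly different dimension in mode $k$), so the hypothesis needed for the next unfolding step still holds; this is just a dimensional bookkeeping remark and not a genuine difficulty.
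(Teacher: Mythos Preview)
Your argument is correct. It differs slightly from the paper's proof in organization: the paper does a single mode-$1$ unfolding, writing $\tsY_{(1)}=\matU_1\tsC_{(1)}(\matU_K\otimes\cdots\otimes\matU_2)^\top$, and then applies Lemma~\ref{lm:abupper} together with the spectral-norm bound $\|\matU_K\otimes\cdots\otimes\matU_2\|_2\le\prod_{k=2}^K\|\matU_k\|_2$ to get the result in one step. You instead peel off one mode at a time via the identity $(\tsA\times_k\matU_k)_{(k)}=\matU_k\tsA_{(k)}$ and iterate. Your route is a touch more elementary, since it avoids the Kronecker-product spectral-norm fact entirely; the paper's route is more compact once that fact is in hand. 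Either way the content is the same and the proof is complete.
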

\begin{proof}
The proof is straightforward. We use $\tsY$ to denote $\tsY=\tsC\times_1\matU_1\times_2\cdots\times_K\matU_K$. Then $\tsY_{(1)}=\matU_1\tsC_{(1)}\left(\matU_K\bigotimes\matU_{K-1}\cdots\bigotimes\matU_2\right)$. Since $\norm{\tsY}_F=\norm{\tsY_{(1)}}_F$, we have,
\begin{align*}
&\norm{\tsY}_F=\norm{\tsY_{(1)}}_F=\norm{\matU_1\tsC_{(1)}\left(\matU_K\bigotimes\matU_{K-1}\cdots\bigotimes\matU_2\right)^\top}_F\\
&\le \norm{\matU_1}_2\norm{\tsC_{(1)}}_F\norm{\left(\matU_K\bigotimes\matU_{K-1}\cdots\bigotimes\matU_2\right)}_2\\
&\le \norm{\tsC_{(1)}}_F\prod_{k=1}^K\norm{\matU_k}_2=\norm{\tsC}_F\prod_{k=1}^K\norm{\matU_k}_2
\end{align*}
where we used Lemma \ref{lm:abupper} in the first inequality, and the fact that $\norm{\matU_K\bigotimes\matU_{K-1}\cdots\bigotimes\matU_2}_2\linebreak\le \norm{\matU_K}_2\cdots\norm{\matU_2}_2$ in the second inequality.
\end{proof}

\end{document}